\icmltitlerunning{First Order Generative Adversarial Networks}
\DeclareMathOperator{\supp}{supp}
\DeclareMathOperator{\oc}{OC}
\DeclareMathOperator*{\myexpectation}{\mathbb{E}}
\newtheorem{lemma}{Lemma}
\newtheorem{definition}{Definition}
\newtheorem{theorem}{Theorem}
\newtheorem{assumption}{Assumption}
\newtheorem{requirement}{Requirement}
\begin{document}

\twocolumn[
\icmltitle{First Order Generative Adversarial Networks}



\icmlsetsymbol{equal}{*}

\begin{icmlauthorlist}
\icmlauthor{Calvin Seward}{zr,jku}
\icmlauthor{Thomas Unterthiner}{jku}
\icmlauthor{Urs Bergmann}{zr}
\icmlauthor{Nikolay Jetchev}{zr}
\icmlauthor{Sepp Hochreiter}{jku}
\end{icmlauthorlist}

\icmlaffiliation{zr}{Zalando Research, M\"uhlenstraße 25, 10243 Berlin, Germany}
\icmlaffiliation{jku}{LIT AI Lab \& Institute of Bioinformatics, Johannes Kepler University Linz, Austria}

\icmlcorrespondingauthor{Calvin Seward}{calvin.seward@zalando.de}

\icmlkeywords{Machine Learning, ICML, Generative Adversarial Networks}

\vskip 0.3in
]



\printAffiliationsAndNotice{}  

\begin{abstract}
GANs excel at learning high dimensional distributions, but they can update generator parameters in directions that do not correspond to the steepest descent direction of the objective.
Prominent examples of problematic update directions include those used in both Goodfellow's original GAN and the WGAN-GP.
To formally describe an optimal update direction, we introduce a theoretical framework which allows the derivation of requirements on both the divergence and corresponding method for determining an update direction,
with these requirements guaranteeing unbiased mini-batch updates in the direction of steepest descent.
We propose a novel divergence which approximates the Wasserstein distance while regularizing the critic's first order information.
Together with an accompanying update direction, this divergence fulfills the requirements for unbiased steepest descent updates.
We verify our method, the First Order GAN, with image generation on CelebA, LSUN and CIFAR-10 and set a new state of the art on the One Billion Word language generation task.
Code to reproduce experiments is available.
\end{abstract}

 \section{Introduction}
\setlength{\footnotemargin}{5mm}
 Generative adversarial networks (GANs) \cite{goodfellow2014generative} excel at learning generative models of complex
 distributions, such as images \cite{radford2015unsupervised,ledig2016photo},
 textures \cite{jetchev2016texture,BergmannJV17,jetchev2017ganosaic}, and
 even texts \cite{gulrajani2017improved,heusel2017gans}.

 GANs learn a generative model $G$ that maps samples from multivariate random noise into a high dimensional space.
 The goal of GAN training is to update $G$ such that the generative model approximates a target probability distribution.
 In order to determine how close the generated and target distributions are, a class of divergences, the so-called adversarial
 divergences was defined and explored by \cite{liu2017approximation}. This class is broad enough to encompass most popular GAN methods
 such as the original GAN \cite{goodfellow2014generative}, $f$-GANs \cite{nowozin2016f}, moment matching networks \cite{li2015generative},
 Wasserstein GANs \cite{arjovsky2017wasserstein} and the tractable version thereof, the WGAN-GP \cite{gulrajani2017improved}.

 GANs learn a generative model with distribution $\mathbb Q$ by minimizing an objective function
 $\tau(\mathbb P\Vert\mathbb Q)$ measuring the similarity between target and generated distributions $\mathbb P$ and $\mathbb Q$.
 In most GAN settings, the objective function to be minimized is an adversarial divergence \cite{liu2017approximation}, where a critic function is learned
 that distinguishes between target and generated data.
 For example, in the classic GAN \cite{goodfellow2014generative}
 the critic $f$ classifies data as real or generated, and the generator $G$ is encouraged
 to generate samples that $f$ will classify as real.

 Unfortunately in GAN training, the generated distribution often fails to converge to the target distribution.
 Many popular GAN methods are unsuccessful with toy examples, for example failing to generate all modes of a
 mixture of Gaussians \cite{srivastava2017veegan,metz2016unrolled}
 or failing to learn the distribution of data on a one-dimensional line in a high dimensional space \cite{fedus2017many}.
 In these situations, updates to the generator don't significantly reduce the divergence between generated and target distributions;
 if there always was a significant reduction in the divergence then the generated distribution would converge to the target.

 The key to successful neural network training lies in the ability to efficiently obtain unbiased estimates of the gradients of a network's
 parameters with respect to some loss.
 With GANs, this idea can be applied to the generative setting. There, the generator $G$ is parameterized by some
 values $\theta\in\mathbb R^m$. If an unbiased estimate of the gradient of the divergence between target and generated distributions
 with respect to $\theta$ can be obtained during mini-batch learning, then SGD can be applied to learn $G$.

 In GAN learning, intuition would dictate updating the generated distribution by moving $\theta$ in the direction of steepest descent
 $-\nabla_\theta\tau(\mathbb P\Vert\mathbb Q_\theta)$. Unfortunately, $-\nabla_\theta\tau(\mathbb P\Vert\mathbb Q_\theta)$
 is generally intractable, therefore $\theta$ is updated according to a tractable method; in most cases
 a critic $f$ is learned and the gradient of the expected critic value
 $\nabla_\theta\mathbb E_{\mathbb Q_\theta}[f]$ is used as the update direction for $\theta$.
 Usually, this update direction and the direction of steepest descent $-\nabla_\theta\tau(\mathbb P\Vert\mathbb Q_\theta)$,
 don't coincide and therefore learning isn't optimal.
 As we see later, popular methods such as WGAN-GP \cite{gulrajani2017improved} are affected by this issue.

 Therefore we set out to answer a simple but fundamental question:
 Is there an adversarial divergence and corresponding method that produces unbiased estimates of the direction of steepest descent in a mini-batch setting?

 In this paper, under reasonable assumptions, we identify a path
 to such an adversarial divergence and accompanying update method. Similar to the WGAN-GP,
 this divergence also penalizes a critic's gradients, and thereby ensures that
 the critic's first order information can be used directly to obtain an update direction
 in the direction of steepest descent.

 This program places four requirements
 on the adversarial divergence and the accompanying update rule for calculating the update direction
 that haven't to the best of our knowledge been formulated together.
 This paper will give rigorous definitions of these requirements, but for now we suffice with intuitive and informal definitions:
 \begin{enumerate}
  \item[A.] the divergence used must decrease as the target and generated distributions approach each other.
  For example, if we define the trivial distance between two probability distribution to be $0$ if the distributions
  are equal, and $1$ otherwise, i.e.\
  \[\tau_{\text{trivial}}(\mathbb P\Vert\mathbb Q):=\begin{cases}0 & \mathbb P=\mathbb Q \\ 1 & \text{otherwise}\end{cases}\]
  then even as $\mathbb Q$ gets close to $\mathbb P$, $\tau_{\text{trivial}}(\mathbb P\Vert\mathbb Q)$ doesn't change.
  Without this requirement, $\nabla_\theta\tau(\mathbb P\Vert\mathbb Q_\theta)=0$ and every direction is a ``direction of steepest descent,''
  \item[B.] critic learning must be tractable,
  \item[C.] the gradient $\nabla_\theta\tau(\mathbb P\Vert\mathbb Q_\theta)$ and the result of an update rule must be well defined,
  \item[D.] the optimal critic enables an update which is an estimate of $-\nabla_\theta\tau(\mathbb P\Vert\mathbb Q_\theta)$.
 \end{enumerate}

 In order to formalize these requirements, we define in Section \ref{S:notation} the notions of adversarial divergences and
 optimal critics. In Section \ref{S:related_work} we will apply
 the adversarial divergence paradigm and begin to formalize the requirements above and better understand existing GAN methods.
 The last requirement is defined precisely in Section \ref{S:update_rule} where we explore criteria for an update rule guaranteeing a low
 variance unbiased estimate of the true gradient $-\nabla_\theta\tau(\mathbb P\Vert\mathbb Q_\theta)$.

 After stating these conditions, we devote Section \ref{S:pen_divergence} to defining
 a divergence, the Penalized Wasserstein Divergence that fulfills the first
 two basic requirements. In this setting, a critic is learned, that similarly to the WGAN-GP critic, pushes real and generated
 data as far apart as possible while being penalized if the critic violates a Lipschitz condition.

 As we will discover, an optimal critic for the Penalized Wasserstein Divergence between two distributions need not be unique.
 In fact, this divergence only specifies the values that the optimal critic assumes on the supports of generated and target distributions.
 Therefore, for many distributions, multiple critics with different gradients on the support of the generated distribution can all be optimal.

 We apply this insight in Section \ref{S:fo_divergence} and add a gradient penalty to define the
 First Order Penalized Wasserstein Divergence. This divergence enforces not just correct values for the critic, but also
 ensures that the critic's gradient, its first order information, assumes values that allow for an easy formulation of an update rule.
 Together, this divergence and update rule fulfill all four requirements.

 We hope that this gradient penalty trick will be applied to other popular GAN methods and ensure that they too return better generator updates.
 Indeed, \cite{fedus2017many} improves existing GAN methods by adding a gradient penalty.

 Finally in Section \ref{S:experiments}, the effectiveness of our method is demonstrated by generating
 images and texts.

 \section{Notation, Definitions and Assumptions}\label{S:notation}

 In \cite{liu2017approximation} an adversarial divergence is defined:
\begin{definition}[Adversarial Divergence]\label{D:adversarial_divergence_old}
 Let $X$ be a topological space, $C(X^2)$ the set of all continuous real valued functions over the Cartesian product 
 of $X$ with itself and set $\mathcal G\subseteq C(X^2)$, $\mathcal G\neq\emptyset$. An adversarial divergence $\tau(\cdot\Vert\cdot)$ over $X$ is a function
 \begin{align*}
  \mathcal P(X)\times\mathcal P(X)&\to\mathbb R\cup\{+\infty\} \\
  (\mathbb P,\mathbb Q)&\mapsto\tau(\mathbb P\Vert\mathbb Q)=\sup_{g\in\mathcal G}\mathbb E_{\mathbb P\otimes\mathbb Q}[g].
 \end{align*}
\end{definition}

The function class $\mathcal G\subseteq C(X^2)$ must be carefully selected if $\tau(\cdot\Vert\cdot)$ is to be reasonable.
For example, if $\mathcal G=C(X^2)$ then the divergence between two Dirac distributions $\tau(\delta_0\Vert\delta_1)=\infty$, and if $\mathcal G=\{0\}$, i.e.\ $\mathcal G$
contains only the constant function which assumes zero everywhere, then $\tau(\cdot\Vert\cdot)=0$.

Many existing GAN procedures can be formulated as an adversarial divergence. For example, setting
 \[\mathcal G=\{x,y\mapsto\log(u(x))+\log(1-u(y))\mid u\in\mathcal V\}\]
 \[\mathcal V=(0,1)^X\cap C(X)\footnote{$(0,1)^X$ denotes all functions mapping $X$ to $(0,1)$.}\]
results in $\tau_G(\mathbb P\Vert\mathbb Q)=\sup_{g\in\mathcal G}\mathbb E_{\mathbb P\otimes\mathbb Q}[g]$,
the divergence in Goodfellow's original GAN \cite{goodfellow2014generative}.
See \cite{liu2017approximation} for further examples.

For convenience, we'll restrict ourselves to analyzing a special case of the adversarial divergence
(similar to Theorem 4 of \cite{liu2017approximation}), and use the notation:

\begin{definition}[Critic Based Adversarial Divergence]\label{D:adversarial_divergence}
 Let $X$ be a topological space, $\mathcal F\subseteq C(X)$, $\mathcal F\neq\emptyset$.
 Further let $f\in\mathcal F$, $m_f:X\times X\to\mathbb R$, $m_f:(x,y)\mapsto m_1(f(x))-m_2(f(y))$ and $r_f\in C(X^2)$.
 Then define
 \begin{align}
  \nonumber\tau:\mathcal P(X)\times\mathcal P(X)\times\mathcal F&\to\mathbb R\cup\{+\infty\} \\
  \label{E:adversarial_divergence}(\mathbb P,\mathbb Q,f)&\mapsto\tau(\mathbb P\Vert\mathbb Q;f)=\mathbb E_{\mathbb P\otimes\mathbb Q}[m_f - r_f]
 \end{align}
 and set $\tau(\mathbb P\Vert\mathbb Q)=\sup_{f\in\mathcal F}\tau(\mathbb P\Vert\mathbb Q;f)$.
\end{definition}

For example, the $\tau_G$ from above can be equivalently defined by setting $\mathcal F=(0,1)^X\cap C(X)$, $m_1(x)=\log(x)$, $m_2(x)=\log(1-x)$ and $r_f=0$. Then
 \begin{equation}\label{E:goodfellow_gan}
 \tau_G(\mathbb P\Vert\mathbb Q)=\sup_{f\in\mathcal F}\mathbb E_{\mathbb P}[m_f-r_f]
 \end{equation}
is a critic based adversarial divergence.

An example with a non-zero $r_f$ is the WGAN-GP \cite{gulrajani2017improved}, which is a critic based adversarial divergence when $\mathcal F=C^1(X)$,
the set of all differentiable real functions on $X$,
$m_1(x)=m_2(x)=x$, $\lambda>0$ and
\[r_f(x,y)=\lambda\mathbb E_{\alpha\sim\mathcal U([0,1])}[(\Vert\nabla_z f(z)|_{\alpha x + (1-\alpha) y}\Vert-1)^2].\]
Then the WGAN-GP divergence $\tau_I(\mathbb P\Vert\mathbb Q)$ is:
\begin{equation}\label{E:WGAN_GP}
 \tau_{I}(\mathbb P\Vert\mathbb Q)=\sup_{f\in\mathcal F}\tau_{I}(\mathbb P\Vert\mathbb Q;f)=\sup_{f\in\mathcal F}\mathbb E_{\mathbb P\otimes\mathbb Q}[m_f - r_f].
\end{equation}

While Definition \ref{D:adversarial_divergence_old} is more general, Definition \ref{D:adversarial_divergence} is more in line with most GAN models.
In most GAN settings, a critic in the simpler $C(X)$ space is learned that separates real and generated data
while reducing some penalty term $r_f$ which depends on both real and generated data. For this reason, we use exclusively the notation from Definition \ref{D:adversarial_divergence}.

One desirable property of an adversarial divergence is that $\tau(\mathbb P^*\Vert\mathbb P)$ obtains its infimum if and only if
$\mathbb P^*=\mathbb P$, leading to the following definition adapted from \cite{liu2017approximation}:

\begin{definition}[Strict adversarial divergence]
 Let $\tau$ be an adversarial divergence over a topological space $X$.
 $\tau$ is called a strict adversarial divergence if for any $\mathbb P,\mathbb P^*\in\mathcal P(X)$,
 \[\tau(\mathbb P^*\Vert\mathbb P)=\inf_{\mathbb P'\in\mathcal P(X)}\tau(\mathbb P^*\Vert\mathbb P')\Rightarrow \mathbb P^*=\mathbb P\]
\end{definition}

In order to analyze GANs that minimize a critic based adversarial divergence, we introduce the set of optimal critics.
\begin{definition}[Optimal Critic,\; $\oc_{\tau}(\mathbb P,\mathbb Q)$]\label{D:opt_critic}
Let $\tau$ be a critic based adversarial divergence over a topological space $X$ and
$\mathbb P,\mathbb Q\in\mathcal P(X)$, $\mathcal F\subseteq C(X)$, $\mathcal F\neq\emptyset$.
 Define $\oc_{\tau}(\mathbb P,\mathbb Q)$ to be the set of critics in $\mathcal F$ that maximize $\tau(\mathbb P\Vert\mathbb Q;\cdot)$.
 That is
\[\oc_{\tau}(\mathbb P,\mathbb Q):=\{f\in\mathcal F\mid \tau(\mathbb P\Vert\mathbb Q;f)=\tau(\mathbb P\Vert\mathbb Q)\}.\]
\end{definition}
Note that $\oc_{\tau}(\mathbb P,\mathbb Q)=\emptyset$ is possible, \cite{arjovsky2017towards}. 
In this paper, we will always assume that if $\oc_{\tau}(\mathbb P,\mathbb Q)\not =\emptyset$, 
then an optimal critic $f^*\in\oc_{\tau}(\mathbb P,\mathbb Q)$ is known. Although is an unrealistic assumption,
see \cite{binkowski2018demystifying}, it is a good starting point for a rigorous GAN analysis.
We hope further works can extend our insights to more realistic cases of approximate critics.


Finally, we assume that generated data is distributed according to a probability distribution $\mathbb Q_\theta$
parameterized by $\theta\in\Theta\subseteq\mathbb R^m$ satisfying the mild regularity Assumption \ref{A:1}. Furthermore, we assume that $\mathbb P$ and $\mathbb Q$ both have
compact and disjoint support in Assumption \ref{A:2}. Although we conjecture that weaker assumptions can be made, we decide for the stronger assumptions
to simplify the proofs.

\begin{assumption}[Adapted from \cite{arjovsky2017wasserstein}]\label{A:1}
 Let $\Theta\subseteq\mathbb R^m$. We say $\mathbb Q_\theta\in\mathcal P(X)$, $\theta\in\Theta$
 satisfies assumption \ref{A:1} if there is a locally Lipschitz function $g:\Theta\times\mathbb R^d\to X$
 which is differentiable in the first argument
 and a distribution $\mathbb Z$ with bounded support in $\mathbb R^d$ such that
 for all $\theta\in\Theta$ it holds $\mathbb Q_\theta\sim g(\theta,z)$
 where $z\sim\mathbb Z$.
\end{assumption}

\begin{assumption}[Compact and Disjoint Distributions]\label{A:2}
 Using $\Theta\subseteq\mathbb R^m$ from Assumption \ref{A:1}, we say that $\mathbb P$ and $(\mathbb Q_\theta)_{\theta\in\Theta}$
 satisfies Assumption \ref{A:2} if for all
 $\theta\in\Theta$ it holds that the supports of
 $\mathbb P$ and $\mathbb Q_\theta$ are compact and disjoint.
\end{assumption}

\section{Requirements Derived From Related Work}\label{S:related_work}
With the concept of an Adversarial Divergence now formally defined, we can investigate
existing GAN methods from an Adversarial Divergence minimization standpoint.
 During the last few years, weaknesses in existing GAN frameworks have been highlighted and new frameworks have been
 proposed to mitigate or eliminate these weaknesses. In this section we'll trace this history and
 formalize requirements for adversarial divergences and optimal updates.

 Although using two competing neural networks for unsupervised learning isn't a new concept \cite{schmidhuber1992learning},
 recent interest in the field started when \cite{goodfellow2014generative} generated images with the divergence $\tau_G$
 defined in Eq.\ \ref{E:goodfellow_gan}.
 However, \cite{arjovsky2017towards} shows if $\mathbb P,\mathbb Q_\theta$ have compact disjoint support then
 $\nabla_\theta\tau_G(\mathbb P\Vert\mathbb Q_\theta)=0$, preventing the use of gradient based learning methods.

 In response to this impediment, the Wasserstein GAN was proposed in \cite{arjovsky2017wasserstein} with the divergence:
 \[\tau_W(\mathbb P\Vert\mathbb Q)=\sup_{\Vert f\Vert_L\leq 1}\mathbb E_{x\sim\mathbb P}[f(x)]-\mathbb E_{x'\sim\mathbb Q}[f(x')]\]
 where $\Vert f\Vert_L$ is the Lipschitz constant of $f$.
 The following example shows the advantage of $\tau_W$. Consider a series of Dirac measures $(\delta_{\frac{1}{n}})_{n>0}$.
 Then $\tau_W(\delta_0\Vert\delta_{\frac 1 n})=\frac 1 n$ while $\tau_G(\delta_0\Vert\delta_{\frac 1 n})=1$. As $\delta_{\frac 1 n}$ approaches
 $\delta_0$, the Wasserstein divergence decreases while $\tau_G(\delta_0\Vert\delta_{\frac 1 n})$ remains constant.

 This issue is explored in \cite{liu2017approximation} by creating a weak ordering, the so-called strength, of divergences.
 A divergence $\tau_1$ is said to be stronger than $\tau_2$ if for any sequence of probability measures $(\mathbb P_n)_{n\in\mathbb N}$ and
 any target probability measure $\mathbb P^*$ the convergence
 $\tau_1(\mathbb P^*\Vert\mathbb P_n)\overset{n\to\infty}{\longrightarrow}\inf_{\mathbb P\in\mathcal P(X)}\tau_1(\mathbb P^*\Vert\mathbb P)$
 implies $\tau_2(\mathbb P^*\Vert\mathbb P_n)\overset{n\to\infty}{\longrightarrow}\inf_{\mathbb P\in\mathcal P(X)}\tau_2(\mathbb P^*\Vert\mathbb P)$.
 The divergences $\tau_1$ and $\tau_2$ are equivalent if $\tau_1$ is stronger than $\tau_2$ and $\tau_2$ is stronger than $\tau_1$.
 The Wasserstein distance $\tau_W$ is the weakest divergence in the class of strict adversarial divergences \cite{liu2017approximation},
 leading to the following requirement:

 \begin{requirement}[Equivalence to $\tau_W$]\label{R:nonzero}
 An adversarial divergence $\tau$ is said to fulfill Requirement \ref{R:nonzero} if $\tau$ is a strict adversarial divergence
 which is weaker than $\tau_W$.
 \end{requirement}

 The issue of the zero gradients was side stepped in \cite{goodfellow2014generative}
 (and the option more rigorously explored in \cite{fedus2017many})
 by not updating with $-\nabla_{\theta}\mathbb E_{x'\sim\mathbb Q_\theta}[\log(1-f(x'))]$ but instead using the gradient
 $\nabla_{\theta}\mathbb E_{x'\sim\mathbb Q_\theta}[f(x')]$.
 As will be shown in Section \ref{S:update_rule}, this update direction doesn't generally move $\theta$ in the direction of steepest descent.

 Although using the Wasserstein distance as a divergence between probability measures solves many theoretical problems,
 it requires that critics are Lipschitz continuous with Lipschitz constant $1$.
 Unfortunately, no tractable algorithm has yet been found that is able to learn the optimal Lipschitz continuous critic
 (or a close approximation thereof).

 This is due in part to the fact that if the critic is
 parameterized by a neural network $f_\vartheta$, $\vartheta\in\Theta_C\subseteq\mathbb R^c$, then the set of admissible parameters
 $\{\vartheta\in\Theta_C\mid\Vert f_\vartheta\Vert_L\leq 1\}$ is highly non-convex.
 Thus critic learning is a non-convex optimization problem (as is generally the case in neural network learning)
 with non-convex constraints on the parameters.
 Since neural network learning is generally an unconstrained optimization problem, adding complex non-convex constraints makes learning intractable with current methods.
 Thus, finding an optimal Lipschitz continuous critic is a problem that can not yet be solved,
 leading to the second requirement:

 \begin{requirement}[Convex Admissible Critic Parameter Set]\label{R:convex_admissible}
 Assume $\tau$ is a critic based adversarial divergence where critics are chosen from a set $\mathcal F$.
 Assume further that in training, a parameterization
 $\vartheta\in\mathbb R^c$ of the critic function $f_\vartheta$ is learned.
 The critic based adversarial divergence $\tau$ is said to fulfill requirement \ref{R:convex_admissible} if
 the set of admissible parameters $\{\vartheta\in\mathbb R^c\mid f_\vartheta\in\mathcal F\}$ is convex.
 \end{requirement}

 It was reasoned in \cite{gulrajani2017improved} that since a Wasserstein critic must have gradients of norm at most $1$ everywhere,
 a reasonable strategy would be to transform the constrained optimization into an unconstrained optimization problem
 by penalizing the divergence when a critic has non-unit gradients.
 With this strategy, the so-called Improved Wasserstein GAN or WGAN-GP divergence defined in Eq.\ \ref{E:WGAN_GP} is obtained.

 The generator parameters are updated by training an optimal critic $f^*$ and updating with $\nabla_\theta\mathbb E_{\mathbb Q_\theta}[f^*]$.
 Although this method has impressive experimental results, it is not yet ideal.
 \cite{petzka2017regularization} showed that an optimal critic for $\tau_I$ has undefined gradients
 on the support of the generated distribution $\mathbb Q_\theta$. Thus,
 the update direction $\nabla_\theta\mathbb E_{\mathbb Q_\theta}[f^*]$ is undefined;
 even if a direction was chosen from the subgradient field (meaning the update direction is defined but random)
 the update direction won't generally point in the direction of steepest gradient descent.
 This naturally leads to the next requirement:

 \begin{requirement}[Well Defined Update Rule]\label{R:differentiable}
 An update rule is said to fulfill Requirement \ref{R:differentiable} on a target distribution $\mathbb P$
 and a family of generated distributions $(\mathbb Q_\theta)_{\theta\in\Theta}$ if
 for every $\theta\in\Theta$ the update rule at $\mathbb P$ and $\mathbb Q_\theta$ is well defined.
  \end{requirement}

 Note that kernel methods such as \cite{dziugaite2015training} and \cite{li2015generative} provide exciting theoretical guarantees
 and may well fulfill all four requirements. Since these guarantees come at a cost in scalability, we won't analyze them further.

\begin{table}
\newcommand{\nox}{\textcolor{red!70!black}{no}}
\newcommand{\yescheck}{\textcolor{green!50!black}{\checkmark}}
\centering
  \caption{Comparing existing GAN methods with regard to the four Requirements formulated in this paper.
  The methods compared are the classic GAN \cite{goodfellow2014generative},
  WGAN \cite{arjovsky2017wasserstein}, WGAN-GP \cite{gulrajani2017improved},
  WGAN-LP \cite{petzka2017regularization}, DRAGAN \cite{dragan}, PWGAN (our method) and FOGAN (our method).
  }\label{Ta:whats_what}
  \begin{tabular}{lcccc}\toprule
  & Req. \ref{R:nonzero} & Req.\ \ref{R:convex_admissible} & Req.\ \ref{R:differentiable} & Req.\ \ref{R:fo_divergence} \tabularnewline \midrule
   GAN & \nox & \yescheck & \yescheck & \nox \tabularnewline
   WGAN & \yescheck & \nox & \yescheck & \yescheck \tabularnewline
   WGAN-GP & \yescheck & \yescheck & \nox & \nox \tabularnewline
   WGAN-LP & \yescheck & \yescheck & \nox & \nox \tabularnewline
   DRAGAN & \yescheck & \yescheck & \yescheck & \nox  \tabularnewline
   PWGAN & \yescheck & \yescheck & \nox & \nox  \tabularnewline
   FOGAN & \yescheck & \yescheck & \yescheck & \yescheck \tabularnewline \bottomrule
  \end{tabular}
 \end{table}

 \section{Correct Update Rule Requirement}\label{S:update_rule}

 In the previous section, we stated a bare minimum requirement for an update rule (namely that it is well defined). In this section, we'll go further
 and explore criteria for a ``good'' update rule. For example in Lemma \ref{L:wgan_counterexample} in Section \ref{S:proof_of_things} of Appendix,
 it is shown that there exists a target $\mathbb P$ and
 a family of generated distributions $(\mathbb Q_\theta)_{\theta\in\Theta}$ fulfilling Assumptions \ref{A:1} and \ref{A:2} such that for the optimal critic
 $f_{\theta_0}^*\in\oc_{\tau_I}(\mathbb P,\mathbb Q_{\theta_0})$ there is no $\gamma\in\mathbb R$ so that
 \[\nabla_\theta\tau_I(\mathbb P\Vert\mathbb Q_\theta)|_{\theta_0}=\gamma\nabla_\theta\mathbb E_{\mathbb Q_\theta}[f_{\theta_0}^*]|_{\theta_0}\]
 for all $\theta_0\in\Theta$
 if all terms are well defined.
 Thus, the update rule used in the WGAN-GP setting, although well defined for this specific $\mathbb P$ and $\mathbb Q_{\theta_0}$,
 isn't guaranteed to move $\theta$ in the direction of steepest descent. In fact, \cite{mescheder2018which} shows that
 the WGAN-GP does not converge for specific classes of distributions.
 Therefore, the question arises what well defined update rule also moves $\theta$ in the direction of steepest descent?

 The most obvious candidate for an update rule is simply use the direction $-\nabla_\theta\tau(\mathbb P\Vert\mathbb Q_\theta)$,
 but since in the adversarial divergence setting $\tau(\mathbb P\Vert\mathbb Q_\theta)$ is the supremum over a set of infinitely many possible critics,
 calculating $-\nabla_\theta\tau(\mathbb P\Vert\mathbb Q_\theta)$ directly is generally intractable.

 One strategy to address this issue is to use an envelope theorem \cite{milgrom2002envelope}.
 Assuming all terms are well defined, then for every optimal critic $f^*\in\oc_{\tau}(\mathbb P,\mathbb Q_{\theta_0})$
 it holds $\nabla_\theta\tau(\mathbb P\Vert\mathbb Q_\theta)|_{\theta_0}=\nabla_\theta\tau(\mathbb P\Vert\mathbb Q_\theta;f^*)|_{\theta_0}$.
 This strategy is outlined in detail in \cite{arjovsky2017wasserstein} when proving the Wasserstein GAN update rule,
 and explored in the context of the classic GAN divergence $\tau_G$ in \cite{arjovsky2017towards}.

 Yet in many GAN settings,
 \cite{goodfellow2014generative,arjovsky2017wasserstein,salimans2016improved,petzka2017regularization},
 the update rule is to train an optimal critic $f^*$ and then take a step in the direction of $\nabla_\theta\mathbb E_{\mathbb Q_\theta}[f^*]$.
 In the critic based adversarial divergence setting (Definition \ref{D:adversarial_divergence}), a direct result of Eq.\ \ref{E:adversarial_divergence}
 together with Theorem 1 from \cite{milgrom2002envelope} is that for every $f^*\in\oc_{\tau}(\mathbb P,\mathbb Q_{\theta_0})$
 \begin{align}
 \nonumber\nabla_\theta\tau(\mathbb P\Vert\mathbb Q_\theta)|_{\theta_0}
 &=\nabla_\theta\tau(\mathbb P\Vert\mathbb Q;f^*)\\
 \label{E:update_rule}&=-\nabla_\theta(\mathbb E_{\mathbb Q_\theta}[m_2(f^*)]+\mathbb E_{\mathbb P\otimes\mathbb Q_\theta}[r_{f^*}])|_{\theta_0}
 \end{align}
 when all terms are well defined. Thus, the update direction $\nabla_\theta\mathbb E_{\mathbb Q_\theta}[f^*]$ only points in the direction of steepest descent for special choices of $m_2$ and $r_f$.
 One such example is the Wasserstein GAN where $m_2(x)=x$ and $r_f=0$.

 Most popular GAN methods don't employ functions $m_2$ and $r_f$ such that the update direction $\nabla_\theta\mathbb E_{\mathbb Q_\theta}[f^*]$ points in the direction of steepest descent.
 For example, with the classic GAN, $m_2(x)=\log(1-x)$ and $r_f=0$, so the update direction $\nabla_\theta\mathbb E_{\mathbb Q_\theta}[f^*]$ clearly is not oriented
 in the direction of steepest descent $\nabla_\theta\mathbb E_{\mathbb Q_\theta}[\log(1-f^*)]$.
 The WGAN-GP is similar, since as we see in Lemma \ref{L:wgan_counterexample} in Appendix, Section \ref{S:proof_of_things},
 $\nabla_\theta\mathbb E_{\mathbb P\otimes\mathbb Q_\theta}[r_{f^*}]$ is not generally
 a multiple of $\nabla_\theta\mathbb E_{\mathbb Q_\theta}[f^*]$.

 The question arises why this direction is used instead of directly calculating the direction of steepest descent?
 Using the correct update rule in Eq.\ \ref{E:update_rule} above involves estimating 
 $\nabla_\theta\mathbb E_{\mathbb P\otimes\mathbb Q_\theta}[r_{f^*}]$, which requires sampling from both $\mathbb P$ and $\mathbb Q_\theta$.
 GAN learning happens in mini-batches, therefore $\nabla_\theta\mathbb E_{\mathbb P\otimes\mathbb Q_\theta}[r_{f^*}]$ isn't calculated directly, but estimated
 based on samples which can lead to variance in the estimate.

 To analyze this issue, we use the notation from \cite{bellemare2017cramer} where $\mathbf{X}_m := X_1, X_2,\ldots, X_m$ are samples from $\mathbb P$ and
 the empirical distribution $\hat{\mathbb P}_m$ is defined by
$\hat{\mathbb P}_m := \hat{\mathbb P}_m(\mathbf{X}_m) := \frac 1 m \sum_{i=1}^m\delta_{X_i}$.
Further let $\mathbb V_{\mathbf{X}_m\sim\mathbb P}$ be the element-wise variance.
Now with mini-batch learning we get\footnote{Because the first expectation doesn't depend on $\theta$, $\nabla_\theta\mathbb E_{\hat{\mathbb P}_m}[m_1(f^*)]=0$. In the same way,
 because the second expectation doesn't depend on the mini-batch $\mathbf X_m$ sampled,  $\mathbb V_{\mathbf{X}_m\sim\mathbb P}[\mathbb E_{\mathbb Q_\theta}[m_2(f^*)]]=0$.}
\begin{align*}
 &\,\mathbb V_{\mathbf{X}_m\sim\mathbb P}[\nabla_\theta\mathbb E_{\hat{\mathbb P}_m\otimes\mathbb Q_\theta}[m_{f^*}-r_{f^*}]|_{\theta_0}] \\
 =\,&\,\mathbb V_{\mathbf{X}_m\sim\mathbb P}[\nabla_\theta(\mathbb E_{\hat{\mathbb P}_m}[m_1(f^*)]-\mathbb E_{\mathbb Q_\theta}[m_2(f^*)] \\
 &\quad\quad\quad\;\;\;\,-\mathbb E_{\hat{\mathbb P}_m\otimes\mathbb Q_\theta}[r_{f^*}])|_{\theta_0}]\\
 =\,&\,\mathbb V_{\mathbf{X}_m\sim\mathbb P}[\nabla_\theta\mathbb E_{\hat{\mathbb P}_m\otimes\mathbb Q_\theta}[r_{f^*}]|_{\theta_0}].
\end{align*}
Therefore, estimation of $\nabla_\theta\mathbb E_{\mathbb P\otimes\mathbb Q_\theta}[r_{f^*}]$ is an extra source of variance.

Our solution to both these problems chooses the critic based adversarial divergence $\tau$
in such a way that there exists a $\gamma\in\mathbb R$
so that for all optimal critics $f^*\in\oc_{\tau}(\mathbb P,\mathbb Q_{\theta_0})$ it holds
\begin{equation}\label{E:first_order_assumption}
 \nabla_\theta\mathbb E_{\mathbb P\otimes\mathbb Q_\theta}[r_{f^*}]|_{\theta_0}\approx \gamma\nabla_\theta\mathbb E_{\mathbb Q_\theta}[m_2(f^*)]|_{\theta_0}.
\end{equation}
In Theorem \ref{T:fogan_theorem} we see conditions on $\mathbb P, \mathbb Q_\theta$ such that equality holds. Now using Eq.\ \ref{E:first_order_assumption} we see that
\begin{align*}
\nabla_\theta\tau(\mathbb P\Vert\mathbb Q_\theta)|_{\theta_0}&=-\nabla_\theta(\mathbb E_{\mathbb Q_\theta}[m_2(f^*)]+\mathbb E_{\mathbb P\otimes\mathbb Q_\theta}[r_{f^*}])|_{\theta_0}\\
&\approx-\nabla_\theta(\mathbb E_{\mathbb Q_\theta}[m_2(f^*)]+\gamma\mathbb E_{\mathbb Q_\theta}[m_2(f^*)])|_{\theta_0}\\
&=-(1+\gamma)\nabla_\theta\mathbb E_{\mathbb Q_\theta}[m_2(f^*)]|_{\theta_0}
\end{align*}
making $(1+\gamma)\nabla_\theta\mathbb E_{\mathbb Q_\theta}[m_2(f^*)]$ a low variance update approximation of the direction of steepest descent.

We're then able to have the best of both worlds.
On the one hand, when $r_f$ serves as a penalty term, training of a critic neural network can happen in
an unconstrained optimization fashion like with the WGAN-GP.
At the same time, the direction of steepest descent can be approximated by calculating $\nabla_\theta\mathbb E_{\mathbb Q_\theta}[m_2(f^*)]$,
and as in the Wasserstein GAN we get reliable gradient update steps.

With this motivation, Eq.\ \ref{E:first_order_assumption} forms the basis of our final requirement:
\begin{requirement}[Low Variance Update Rule]\label{R:fo_divergence}
 An adversarial divergence $\tau$ is said to fulfill requirement \ref{R:fo_divergence} if
 $\tau$ is a critic based adversarial divergence and
 for every optimal critic $f^*\in\oc_{\tau}(\mathbb P,\mathbb Q_{\theta_0})$ fulfills Eq.\ \ref{E:first_order_assumption}.
 \end{requirement}
 
It should be noted that the WGAN-GP achieves impressive experimental results; we conjecture that in
many cases $\nabla_\theta\mathbb E_{\mathbb Q_\theta}[f^*]$ close enough to the true direction of steepest descent.
Nevertheless, as the experiments in Section \ref{S:experiments} show, our gradient estimates lead to better convergence in
a challenging language modeling task.

\section{Penalized Wasserstein Divergence}\label{S:pen_divergence}
 We now attempt to find an adversarial
 divergence that fulfills all four requirements. We start by formulating an adversarial divergence $\tau_P$ and a corresponding update rule
 than can be shown to comply with Requirements \ref{R:nonzero} and \ref{R:convex_admissible}.
 Subsequently in Section \ref{S:fo_divergence}, $\tau_P$ will be refined to make its update rule
 practical and conform to all four requirements.

 The divergence $\tau_P$ is inspired by the Wasserstein distance, there for an optimal critic between two Dirac distributions
 $f^*\in\oc_{\tau}(\delta_a,\delta_b)$ it holds $f(a)-f(b)=|a-b|$. Now if we look at
 \begin{equation}\label{E:simple_pwgan}
  \tau_{\text{simple}}(\delta_a\Vert\delta_b):=\sup_{f\in\mathcal F}f(a)-f(b)-\frac{(f(a)-f(b))^2}{|a-b|}
 \end{equation}
 it's easy to calculate that $\tau_{\text{simple}}(\delta_a\Vert\delta_b)=\frac{1}{4}|a-b|$, which is the same up to a constant (in this simple setting)
 as the Wasserstein distance, without being a constrained optimization problem. See Figure \ref{F:example} for an example.

 This has another intuitive explanation. Because Eq.\ \ref{E:simple_pwgan} can be reformulated as
 \[\tau_{\text{simple}}(\delta_a\Vert\delta_b)=\sup_{f\in\mathcal F}f(a)-f(b)-|a-b|\left(\frac{f(a)-f(b)}{|a-b|}\right)^2\]
 which is a tug of war between the objective $f(a)-f(b)$ and the squared Lipschitz penalty $\frac{|f(a)-f(b)|}{|a-b|}$ weighted by $|a-b|$.
 This $|a-b|$ term is important (and missing from \cite{gulrajani2017improved}, \cite{petzka2017regularization})
 because otherwise the slope of the optimal critic between $a$ and $b$ will depend on $|a-b|$.

 The penalized Wasserstein divergence $\tau_P$ is a straight-forward adaptation of $\tau_{\text{simple}}$ to the multi dimensional case.
 \begin{definition}[Penalized Wasserstein Divergence]\label{D:pen_divergence}
  Assume $X\subseteq\mathbb R^n$ and $\mathbb P,\mathbb Q\in\mathcal P(X)$ are probability measures over $X,$ $\lambda>0$
  and $\mathcal F=C^1(X)$. Set
  \begin{align*}
   \tau_P(\mathbb P\Vert\mathbb Q;f):=\,&\mathbb E_{x\sim\mathbb P}[f(x)]-\mathbb E_{x'\sim\mathbb Q}[f(x')] \\
  -\lambda\,&\mathbb E_{x\sim\mathbb P,x'\sim\mathbb Q}\left[\frac{(f(x)-f(x'))^2}{\Vert x-x'\Vert}\right].
   \end{align*}
  Define the penalized Wasserstein divergence as
  \[\tau_P(\mathbb P\Vert\mathbb Q)=\sup_{f\in\mathcal F}\tau_P(\mathbb P\Vert\mathbb Q;f).\]
  This divergence is updated by picking an optimal critic $f^*\in\oc_{\tau_P}(\mathbb P,\mathbb Q_{\theta_0})$
  and taking a step in the direction of $\nabla_\theta\mathbb E_{\mathbb Q_\theta}[f^*]|_{\theta_0}$.
 \end{definition}

 This formulation is similar to the WGAN-GP \cite{gulrajani2017improved}, restated here in Eq.\ \ref{E:WGAN_GP}.
 \begin{theorem}\label{T:pwgan_theorem}
  Assume $X\subseteq\mathbb R^n$, and $\mathbb P,\mathbb Q_\theta\in\mathcal P(X)$ are probability measures over
  $X$ fulfilling Assumptions \ref{A:1} and \ref{A:2}.
  Then for every $\theta_0\in\Theta$ the Penalized Wasserstein Divergence with it's corresponding update direction
  fulfills Requirements \ref{R:nonzero} and \ref{R:convex_admissible}.
  
  Further, there exists an optimal critic $f^*\in\oc_{\tau_P}(\mathbb P,\mathbb Q_{\theta_0})$ that
  fulfills Eq.\ \ref{E:first_order_assumption} and thus Requirement \ref{R:fo_divergence}.
 \end{theorem}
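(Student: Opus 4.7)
The three claims separate cleanly, so I would address them in turn. Requirement \ref{R:convex_admissible} is immediate: since $\mathcal F = C^1(X)$ contains every differentiable parameterization $f_\vartheta$ for all $\vartheta \in \mathbb R^c$, the admissible parameter set is all of $\mathbb R^c$, which is convex. For Requirement \ref{R:nonzero} I would handle strictness and the weaker-than-$\tau_W$ comparison separately. For strictness, suppose $\tau_P(\mathbb P^* \Vert \mathbb P) = 0$; then plugging $f = \epsilon g$ for any $g \in C^1(X)$ into $\tau_P(\mathbb P^* \Vert \mathbb P; \cdot)$ and expanding to leading order gives $\epsilon(\mathbb E_{\mathbb P^*}[g] - \mathbb E_\mathbb P[g]) - O(\epsilon^2) \leq 0$ for all small $\epsilon$ of either sign, which forces $\mathbb E_{\mathbb P^*}[g] = \mathbb E_\mathbb P[g]$ on the whole test class, hence $\mathbb P = \mathbb P^*$. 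For being weaker than $\tau_W$, the pointwise bound $t - \lambda t^2 / s \leq s/(4\lambda)$ applied with $t = f(x) - f(x')$, $s = \|x - x'\|$ controls the integrand of $\tau_P$; combined with the Kantorovich dual and compactness of supports, this shows $\tau_W(\mathbb P^* \Vert \mathbb P_n) \to 0$ implies $\tau_P(\mathbb P^* \Vert \mathbb P_n) \to 0$.

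The substantive part is constructing an optimal critic that satisfies Eq.\ \ref{E:first_order_assumption}. Under Assumption \ref{A:2}, the restrictions $u = f|_{\supp \mathbb P}$ and $v = f|_{\supp \mathbb Q_{\theta_0}}$ are independent (the supports are compact and disjoint), so taking functional derivatives of $\tau_P(\mathbb P \Vert \mathbb Q_{\theta_0}; f)$ yields the first-order conditions $2\lambda\, \mathbb E_{x' \sim \mathbb Q_{\theta_0}}[(u(x) - v(x'))/\|x - x'\|] = 1$ for $x \in \supp \mathbb P$ together with the symmetric equation for $x' \in \supp \mathbb Q_{\theta_0}$. These determine the values of any optimal $f^*$ on the two supports but leave the gradient $\nabla f^*|_{\supp \mathbb Q_{\theta_0}}$ completely free. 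This residual freedom is the key lever.

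Next I would write $x' = g(\theta, z)$ using Assumption \ref{A:1}, apply the chain rule to both $\nabla_\theta \mathbb E_{\mathbb Q_\theta}[f^*]$ and $\nabla_\theta \mathbb E_{\mathbb P \otimes \mathbb Q_\theta}[r_{f^*}]$ at $\theta_0$, and observe that both expressions have the form $\mathbb E_z[J_\theta(z)^T V(g(\theta_0, z))]$ for vector fields $V$ on $\supp \mathbb Q_{\theta_0}$ that depend on $\nabla f^*$. Using the first-order condition to simplify the cross term and solving pointwise at each $x' \in \supp \mathbb Q_{\theta_0}$ for proportionality of the two $V$'s yields the prescription
\[
\nabla f^*(x') \;=\; \tfrac{\lambda}{1+\gamma}\, \mathbb E_{x \sim \mathbb P}\!\left[(f^*(x) - f^*(x'))^2 \tfrac{x - x'}{\|x - x'\|^3}\right]
\]
for any chosen constant $\gamma \neq -1$. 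Since disjointness of supports keeps $\|x - x'\|$ bounded below, this defines a continuous vector field on the compact set $\supp \mathbb Q_{\theta_0}$. Whitney's $C^1$ extension theorem then delivers an $f^* \in C^1(X)$ matching the values of $u^*, v^*$ on $\supp \mathbb P \cup \supp \mathbb Q_{\theta_0}$ and the prescribed gradient on $\supp \mathbb Q_{\theta_0}$; this $f^*$ lies in $\oc_{\tau_P}(\mathbb P, \mathbb Q_{\theta_0})$ (because the critic based divergence only sees values on the supports) and by construction satisfies Eq.\ \ref{E:first_order_assumption} as an equality.

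The main obstacle is the simultaneous value-and-gradient extension: one must verify that the prescribed continuous gradient field on $\supp \mathbb Q_{\theta_0}$, together with the FOC values on both supports, is compatible with a $C^1$ extension to $X$. Assumption \ref{A:2} is essential here, since the compact disjoint supports can be handled on separate smooth patches glued via a partition of unity, sidestepping any nontrivial Whitney compatibility condition across the two supports. A secondary nuisance is justifying the interchange of $\nabla_\theta$ and expectation in the chain-rule step, which follows from the local Lipschitz assumption on $g$ together with the bounded support of $\mathbb Z$.
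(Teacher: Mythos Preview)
Your handling of Requirement~\ref{R:convex_admissible} and the strictness part of Requirement~\ref{R:nonzero} is fine and matches the paper in spirit. The ``weaker than $\tau_W$'' argument is sketchy: the pointwise bound $t-\lambda t^2/s\le s/(4\lambda)$ only yields $\tau_P(\mathbb P\Vert\mathbb Q)\le\frac{1}{4\lambda}\mathbb E_{\mathbb P\otimes\mathbb Q}[\Vert x-x'\Vert]$, and the right-hand side does not vanish when $\mathbb P=\mathbb Q$, so you still need an argument connecting it to $\tau_W$. The paper (Lemma~\ref{L:equavent}) instead argues that any optimal critic for $\tau_P$ is Lipschitz with a constant independent of $\mathbb P,\mathbb Q$, whence $\tau_P\le C\,\tau_W$.

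The substantive gap is in your Whitney-extension construction of $f^*$. You correctly observe that optimality of $\tau_P$ pins down $f^*$ only on $\supp(\mathbb P)\cup\supp(\mathbb Q_{\theta_0})$ and leaves $\nabla f^*$ on $\supp(\mathbb Q_{\theta_0})$ free, and your formula for $\nabla f^*(x')$ is the right pointwise condition for $V_2=\gamma V_1$. But the Whitney $C^1$ extension theorem does \emph{not} let you prescribe values and gradients independently on a closed set: the first-order Taylor compatibility $f^*(x'_1)-f^*(x'_2)-\nabla f^*(x'_2)\cdot(x'_1-x'_2)=o(\Vert x'_1-x'_2\Vert)$ must hold for $x'_1,x'_2\in\supp(\mathbb Q_{\theta_0})$. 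When $\supp(\mathbb Q_{\theta_0})$ is a positive-dimensional set (the typical GAN case under Assumption~\ref{A:1}), this forces the \emph{tangential} component of your prescribed gradient to equal the tangential derivative of the FOC-determined values. Differentiating the FOC along a tangent direction $w$ gives
\[
\langle w,\nabla f^*(x')\rangle=\frac{\mathbb E_{x\sim\mathbb P}\bigl[(f^*(x)-f^*(x'))\,\langle x-x',w\rangle/\Vert x-x'\Vert^3\bigr]}{\mathbb E_{x\sim\mathbb P}[1/\Vert x-x'\Vert]},
\]
which is linear in $f^*(x)-f^*(x')$, whereas your prescription is quadratic in the same quantity; no single $\gamma$ makes them agree for all $x'$. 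Your partition-of-unity remark addresses only the trivial compatibility \emph{across} the two disjoint supports, not this intra-support obstruction.

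The paper avoids this entirely: Lemma~\ref{L:existence} constructs, via a Banach fixed-point argument, a critic satisfying the FOC not merely on $\supp(\mathbb Q_{\theta_0})$ but on an open neighbourhood $\Omega\supseteq\supp(\mathbb Q_{\theta_0})$. The gradient of that critic is then automatically compatible with its values (it is the honest gradient of a $C^1$ function on $\Omega$), and differentiating the FOC in $x'$ yields exactly the vanishing condition $\nabla_{x}\,\mathbb E_{\tilde x\sim\mathbb P}\bigl[(f^*(\tilde x)-f^*(x))/\Vert\tilde x-x\Vert\bigr]\big|_{x'}=0$ needed in the computation of Lemma~\ref{L:first_order}. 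The upshot is $\gamma=-\tfrac12$ rather than a freely chosen constant.
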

 \begin{proof}
  See Appendix, Section \ref{S:proof_of_things}.
 \end{proof}

 Note that this theorem isn't unique to $\tau_P$.
 For example, for the penalty in Eq.\ 8 of \cite{petzka2017regularization} we conjecture that a similar result can be shown.
 The divergence $\tau_P$ is still very useful because, as will be shown in the next section,
 $\tau_P$ can be modified slightly to obtain a new critic $\tau_F$, where every optimal critic
 fulfills Requirements \ref{R:nonzero} to \ref{R:fo_divergence}.
 
 Since $\tau_P$ only constrains the value of a critic on the supports of $\mathbb P$ and $\mathbb Q_\theta$, 
 many different critics are optimal, and in general $\nabla_\theta\mathbb E_{\mathbb Q_\theta}[f^*]$ depends on the optimal critic choice and is thus is not well defined.
 With this, Requirements \ref{R:differentiable} and \ref{R:fo_divergence} are not fulfilled.
 See Figure \ref{F:example} for a simple example.
 
 \begin{figure}
\begin{minipage}[b]{.5\linewidth}
\centering
\includegraphics[width=\linewidth]{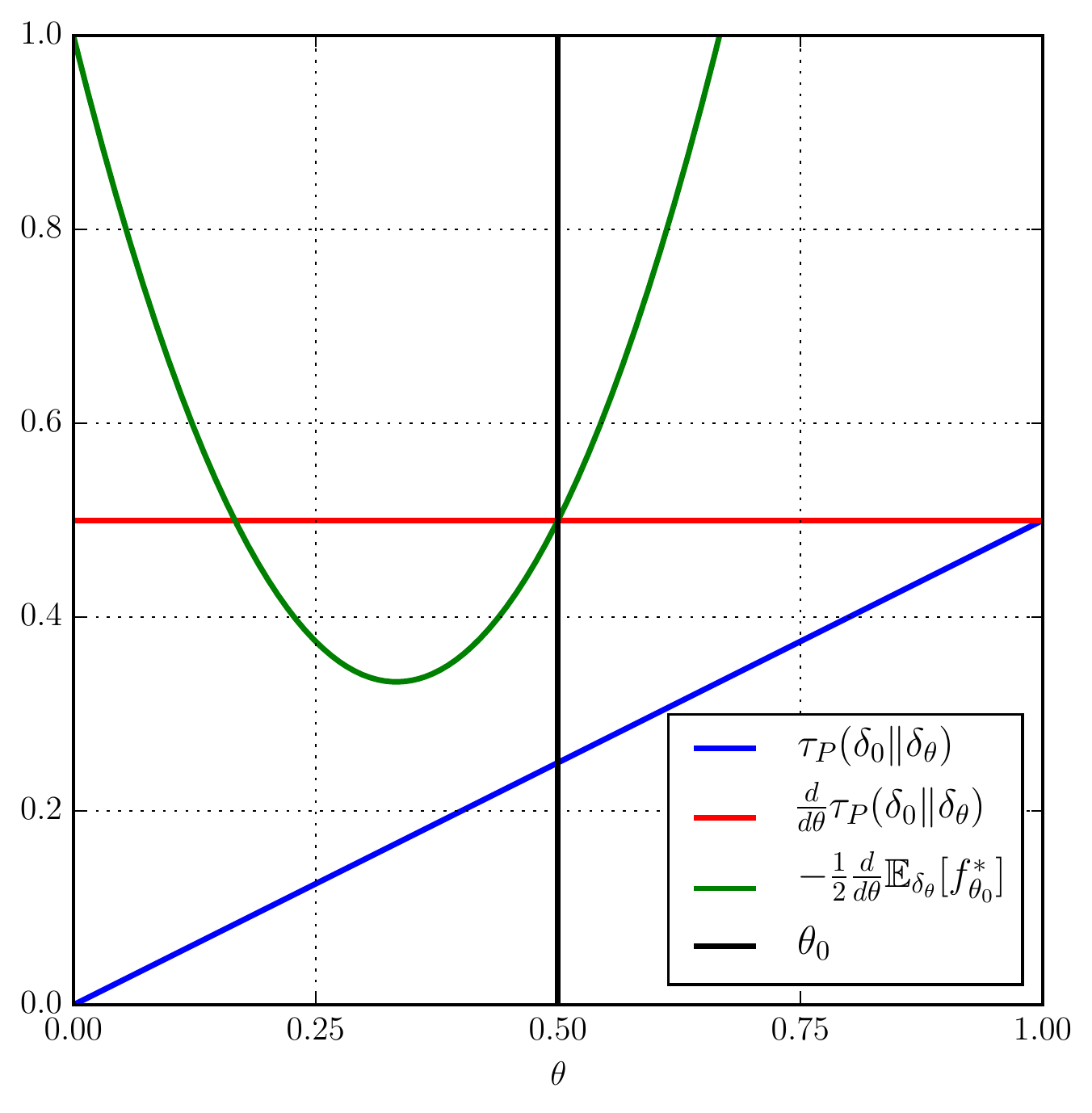}
\subcaption{\footnotesize{First order critic}}\label{F:first_order_critic}
\end{minipage}%
\begin{minipage}[b]{.5\linewidth}
\centering
\includegraphics[width=\linewidth]{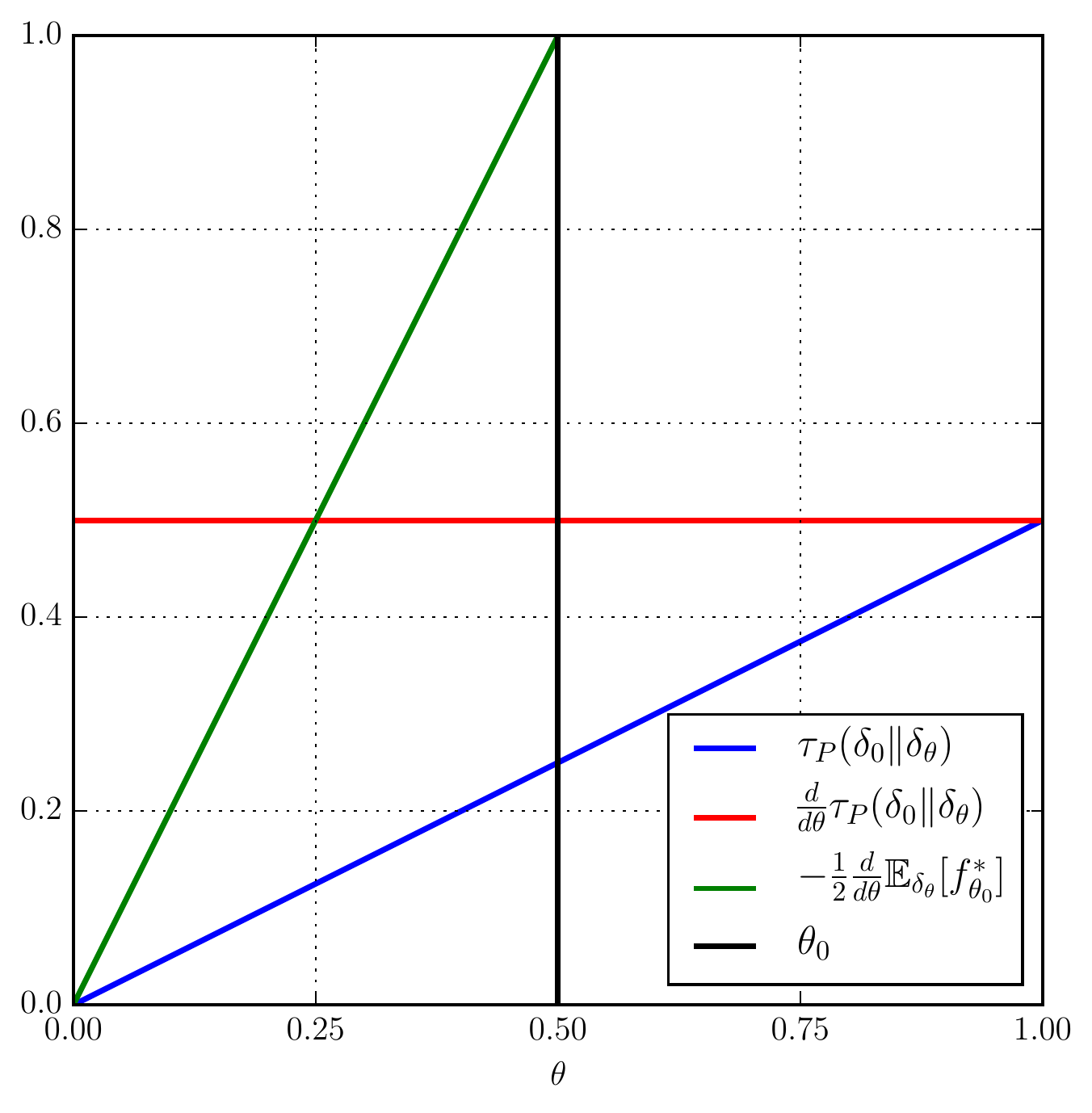}
\subcaption{\footnotesize{Normal critic}}\label{F:normal_critic}
\end{minipage}
\caption{
Comparison of $\tau_P$ update rule given different optimal critics.
Consider the simple example of divergence $\tau_P$ from Definition \ref{D:pen_divergence} between Dirac measures 
with update rule $\frac 1 2 \frac{d}{d\theta}\mathbb E_{\delta_\theta}[f]$
(the update rule is from Lemma \ref{L:first_order} in Appendix, Section \ref{S:proof_of_things}).
Recall that $\tau_P(\delta_0\Vert\delta_\theta;f)=-f(\theta)-\frac{(f(\theta))^2}{2\theta}$, and that so $\frac{d}{d\theta}\tau_P(\delta_0\Vert\delta_\theta)=\frac 1 2$.
Let $\theta_0=0.5$; our goal is to calculate $\frac{d}{d\theta}\tau_P(\delta_0\Vert\delta_\theta)|_{\theta=\theta_0}$ via our update rule.
Since multiple critics are optimal for $\tau_P(\delta_0\Vert\delta_\theta)$, we will explore how the choice of optimal critic affects the update.
In Subfigure \ref{F:first_order_critic}, we chose the first order optimal critic $f^*_{\theta_0}(x)=x(-4x^2 + 4x - 2)$, and
$\frac{d}{d\theta}\tau_P(\delta_0\Vert\delta_\theta)|_{\theta=\theta_0}=-\frac 1 2 \frac{d}{d\theta}\mathbb E_{\delta_\theta}[f^*_{\theta_0}]_{\theta=\theta_0}$
and the update rule is correct (see how the red, black and green lines all intersect in one point). In Subfigure \ref{F:normal_critic},
the optimal critic is set to $f^*_{\vartheta_0}(x)=-2x^2$ which is not a first order critic
resulting in the update rule calculating an incorrect update.
}\label{F:example}
\end{figure}
 
 In theory, $\tau_P$'s critic could be trained with a modified sampling procedure so that $\nabla_\theta\mathbb E_{\mathbb Q_\theta}[f^*]$
 is well defined and Eq.\ \ref{E:first_order_assumption} holds, as is done in both \cite{dragan} and \cite{unterthiner2017coulomb}.
 By using a method similar to \cite{bishop1998gtm}, one can minimize the divergence $\tau_P(\mathbb P,\hat {\mathbb Q}_\theta)$
 where $\hat {\mathbb Q}_\theta$ is data equal to $x'+\epsilon$ where $x'$ is sampled from $\mathbb Q_\theta$ and $\epsilon$ is some zero-mean
 uniform distributed noise. In this way the support
 of $\hat {\mathbb Q}_\theta$ lives in the full space $X$ and not the submanifold $\supp(\mathbb Q_\theta)$.
 Unfortunately, while this method works in theory,
 the number of samples required for accurate gradient estimates scales with the dimensionality of the underlying space $X$,
 not with the dimensionality of data or generated submanifolds $\supp(\mathbb P)$ or $\supp(\mathbb Q_\theta)$.
 In response, we propose the First Order Penalized Wasserstein Divergence.
 
 \section{First Order Penalized Wasserstein Divergence}\label{S:fo_divergence}
 As was seen in the last section, since $\tau_P$ only constrains the value of optimal critics on the supports of $\mathbb P$ and $\mathbb Q_\theta$,
 the gradient $\nabla_\theta\mathbb E_{\mathbb Q_\theta}[f^*]$ is not well defined. A natural method to refine $\tau_P$ to achieve
 a well defined gradient is to enforce two things:
 \begin{itemize}
  \item $f^*$ should be optimal on a larger manifold, namely the manifold $\mathbb Q'_\theta$ that is created by
 ``stretching'' $\mathbb Q_\theta$ bit in the direction of $\mathbb P$ (the formal definition is below). 
  \item The norm of the gradient of the optimal critic, $\Vert\nabla_x f^*(x) \Vert$ on $\supp(\mathbb Q'_\theta)$ should be equal to the norm of the maximal directional
  derivative in the support of $\mathbb Q'_\theta$ (see Eq.\ \ref{E:g_zero} in Appendix).
 \end{itemize}
 By enforcing these two points, we assure that $\nabla_x f^*(x)$ is well defined and points towards the real data $\mathbb P$. Thus, the following
 definition emerges (see proof of Lemma \ref{L:first_order} in Appendix, Section \ref{S:proof_of_things} for details).
 \begin{definition}[First Order Penalized Wasserstein Divergence (FOGAN)]\label{D:fo_divergence}
  Assume $X\subseteq \mathbb R^n$ and $\mathbb P,\mathbb Q\in\mathcal P(X)$ are probability measures over $X$.
  Set $\mathcal F=C^1(X)$, $\lambda,\mu>0$ and
  \begin{small}
  \begin{align*}
   &\tau_F(\mathbb P\Vert\mathbb Q;f):=\mathbb E_{x\sim\mathbb P}[f(x)]-\mathbb E_{x'\sim\mathbb Q}[f(x')]\\
  &-\lambda\myexpectation_{x\sim\mathbb P,x'\sim\mathbb Q}\left[\frac{(f(x)-f(x'))^2}{\Vert x-x'\Vert}\right] \\
  &-\mu\myexpectation_{x\sim\mathbb P, x'\sim\mathbb Q}\left(\Vert\nabla_x f(x)\big|_{x'}\Vert-
  \frac{\left\Vert\mathbb E_{\tilde x\sim\mathbb P}[(\tilde x- x')\frac{f(\tilde x)-f(x')}{\Vert x'-\tilde x\Vert^3}]\right\Vert}
 {\mathbb E_{\tilde x\sim\mathbb P}[\frac{1}{\Vert x'-\tilde x\Vert}]}
  \right)^2
  \end{align*}
  \end{small}
  Define the First Order Penalized Wasserstein Divergence as
  \[\tau_F(\mathbb P\Vert\mathbb Q)=\sup_{f\in\mathcal F}\tau_F(\mathbb P\Vert\mathbb Q;f).\]
  This divergence is updated by picking an optimal critic $f^*\in\oc_{\tau_P}(\mathbb P,\mathbb Q_{\theta_0})$
  and taking a step in the direction of $\nabla_\theta\mathbb E_{\mathbb Q_\theta}[f^*]|_{\theta_0}$.
 \end{definition}

 In order to define a GAN from the First Order Penalized Wasserstein Divergence, we must define a slight modification of
 the generated distribution $\mathbb Q_\theta$ to obtain $\mathbb Q'_\theta$. Similar to the WGAN-GP setting,
 samples from $\mathbb Q'_\theta$ are obtained by $x'-\alpha(x'-x)$ where $x\sim\mathbb P$ and $x'\sim\mathbb Q_\theta$.
 The difference is that $\alpha\sim\mathcal U([0,\varepsilon])$, with $\varepsilon$ chosen small, making $\mathbb Q_\theta$
 and $\mathbb Q'_\theta$ quite similar. Therefore updates to $\theta$ that reduce $\tau_F(\mathbb P\Vert\mathbb Q'_\theta)$ also reduce
 $\tau_F(\mathbb P\Vert\mathbb Q_\theta)$.
 
 Conveniently, as is shown in Lemma \ref{L:adversary_subset} in Appendix, Section \ref{S:proof_of_things},
 any optimal critic for the First Order Penalized Wasserstein divergence
 is also an optimal critic for the Penalized Wasserstein Divergence.  
 The key advantage to the First Order Penalized Wasserstein Divergence is that for any $\mathbb P$, $\mathbb Q_\theta$ fulfilling Assumptions \ref{A:1}
 and \ref{A:2}, $\tau_F(\mathbb P\Vert\mathbb Q'_\theta)$ with its corresponding update rule
 $\nabla_\theta\mathbb E_{\mathbb Q'_\theta}[f^*]$ on the slightly modified probability distribution $\mathbb Q'_\theta$
  fulfills requirements \ref{R:differentiable} and \ref{R:fo_divergence}.

 \begin{theorem}\label{T:fogan_theorem}
  Assume $X\subseteq\mathbb R^n$, and $\mathbb P,\mathbb Q_\theta\in\mathcal P(X)$ are probability measures over
  $X$ fulfilling Assumptions \ref{A:1} and \ref{A:2} and $\mathbb Q'_\theta$ is $\mathbb Q_\theta$ modified
  using the method above.
  Then for every $\theta_0\in\Theta$ there exists at least one optimal critic $f^*\in\oc_{\tau_F}(\mathbb P,\mathbb Q'_{\theta_0})$
  and $\tau_F$ combined with update direction $\nabla_\theta\mathbb E_{\mathbb Q_\theta}[f^*]|_{\theta_0}$ fulfills 
  Requirements \ref{R:nonzero} to \ref{R:fo_divergence}. If $\mathbb P,\mathbb Q'_\theta$ are such that $\forall x,x'\in\supp(\mathbb P),\supp(\mathbb Q'_\theta)$
  it holds $f^*(x)-f^*(x')=c\Vert x-x'\Vert$ for some constant $c$, then equality holds for Eq.\ \ref{E:first_order_assumption}.
 \end{theorem}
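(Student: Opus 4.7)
The plan is to prove existence, then transfer Requirements \ref{R:nonzero} and \ref{R:convex_admissible} from the analogous Theorem \ref{T:pwgan_theorem} for $\tau_P$, then use the new first-order penalty term to secure Requirements \ref{R:differentiable} and \ref{R:fo_divergence}, finally handling the exact-equality case. The crucial lever is Lemma \ref{L:adversary_subset}, which says every optimal $\tau_F$ critic is also an optimal $\tau_P$ critic. Existence can then be obtained constructively: starting from an optimal $\tau_P$ critic (which only constrains values on $\supp(\mathbb{P})\cup\supp(\mathbb{Q}'_{\theta_0})$, guaranteed to exist by compactness and disjointness via Assumption \ref{A:2}), we extend it to a $C^1$ function on $X$ whose gradient at every $x'\in\supp(\mathbb{Q}'_{\theta_0})$ equals the unique vector forced by zeroing out the $\mu$-penalty. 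Since compact disjoint supports leave room for smooth extensions, such an $f^*$ exists.

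Requirement \ref{R:nonzero} (equivalence to $\tau_W$ up to strength) and Requirement \ref{R:convex_admissible} (convex admissible parameter set, since $\mathcal F=C^1(X)$ imposes no constraint on the neural-net parameter vector $\vartheta$) transfer immediately: a sequence $\mathbb{Q}_n\to\mathbb{P}$ in $\tau_W$ forces $\tau_P(\mathbb{P}\|\mathbb{Q}_n)\to 0$ by Theorem \ref{T:pwgan_theorem}, and since $\tau_F\ge\tau_P$ restricted to those critics that also zero-out the $\mu$-penalty (and the supremum is nonnegative, attained at $f\equiv 0$), one checks $\tau_F\to 0$ likewise, while strictness holds because $\tau_F=0$ forces the $\tau_P$-part to vanish and hence $\mathbb{P}=\mathbb{Q}$. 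For Requirement \ref{R:differentiable}, the first-order penalty at optimality gives an explicit formula for the magnitude $\|\nabla_x f^*(x')\|$ at every $x'\in\supp(\mathbb{Q}'_{\theta_0})$, and the $\tau_P$-part pins down the direction (by the tug-of-war argument from the proof of Lemma \ref{L:first_order}); together these fully determine $\nabla_x f^*$ on $\supp(\mathbb{Q}'_{\theta_0})$, so that $\nabla_\theta\mathbb{E}_{\mathbb{Q}_\theta}[f^*]|_{\theta_0}=\mathbb{E}_{z\sim\mathbb{Z}}[\nabla_x f^*(g(\theta_0,z))\cdot\nabla_\theta g(\theta,z)|_{\theta_0}]$ is well-defined by Assumption \ref{A:1}.

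For Requirement \ref{R:fo_divergence}, one computes $\nabla_\theta\mathbb{E}_{\mathbb{P}\otimes\mathbb{Q}_\theta}[r_{f^*}]|_{\theta_0}$ using Leibniz, noting that because $f^*$ is held fixed at its optimal value at $\theta_0$ (envelope theorem as in Eq.\ \ref{E:update_rule}), only the derivatives with respect to $x'\sim\mathbb{Q}_\theta$ inside $r_{f^*}$ contribute. Differentiating the $\lambda$-term $\mathbb{E}_{x\sim\mathbb{P},x'\sim\mathbb{Q}_\theta}[(f^*(x)-f^*(x'))^2/\|x-x'\|]$ in $x'$ produces an integrand in which $\nabla_{x'}f^*(x')$ is multiplied by $\mathbb{E}_{\tilde x\sim\mathbb{P}}[(f^*(\tilde x)-f^*(x'))/\|x'-\tilde x\|]$ plus a term with the quotient $(\tilde x-x')(f^*(\tilde x)-f^*(x'))/\|x'-\tilde x\|^3$; the specific formula for $\nabla_x f^*(x')$ extracted in the previous paragraph is precisely what makes these two contributions collapse into a scalar multiple of $\nabla_{x'}f^*(x')$, and composing with the chain rule yields a scalar multiple of $\nabla_\theta\mathbb{E}_{\mathbb{Q}_\theta}[f^*]|_{\theta_0}$, i.e.\ Eq.\ \ref{E:first_order_assumption}. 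The equality case under $f^*(x)-f^*(x')=c\|x-x'\|$ is then a routine substitution: this hypothesis makes the direction $\nabla_x f^*$ coincide everywhere on supports with a normalized displacement field so that every approximation inequality used above is tight.

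The main obstacle is the calculation in the third paragraph: verifying that the particular quotient appearing in the $\mu$-penalty is exactly the one that makes $\nabla_\theta\mathbb{E}_{\mathbb{P}\otimes\mathbb{Q}_\theta}[r_{f^*}]$ proportional to $\nabla_\theta\mathbb{E}_{\mathbb{Q}_\theta}[f^*]$. This is not conceptually hard but requires a careful bookkeeping of three tensorial pieces (the scalar envelope term, the vector gradient term, and the Jacobian $\nabla_\theta g$) and the observation that the $\lambda$-Lipschitz penalty's derivative in $x'$ produces precisely the denominator $\mathbb{E}_{\tilde x\sim\mathbb{P}}[1/\|x'-\tilde x\|]$ that the $\mu$-term's formula cancels. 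The rest is then straightforward algebra using Assumption \ref{A:1} to differentiate under the expectation.
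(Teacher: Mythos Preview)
Your plan is essentially the paper's: invoke Lemma~\ref{L:adversary_subset} for existence and the containment $\oc_{\tau_F}\subseteq\oc_{\tau_P}$, then run the envelope-plus-chain-rule computation of Lemma~\ref{L:first_order} (supported by Lemma~\ref{L:first_order_helper}) for Requirements~\ref{R:differentiable} and~\ref{R:fo_divergence}. Two small corrections are worth making.

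First, your Requirement~\ref{R:nonzero} argument has the inequality pointing the wrong way: to conclude $\tau_F\to 0$ from $\tau_P\to 0$ you need $\tau_F\le\tau_P$ (which holds because the $\mu$-penalty is nonnegative), not ``$\tau_F\ge\tau_P$ on zero-penalty critics.'' The paper avoids this entirely by using Claim~(2) of Lemma~\ref{L:adversary_subset}, namely $\tau_F(\mathbb P\Vert\mathbb Q)=\tau_P(\mathbb P\Vert\mathbb Q)$ as divergences; equivalence to $\tau_W$ and strictness then transfer verbatim from Lemma~\ref{L:equavent} via Theorem~\ref{T:pwgan_theorem}.

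Second, for Requirement~\ref{R:differentiable} your attribution is reversed. The $\tau_P$-part alone constrains only values of $f^*$, not the direction of $\nabla_x f^*$. The actual mechanism (Lemma~\ref{L:first_order_helper}) is that the \emph{stretched} construction of $\mathbb Q'_\theta$ makes the slope condition Eq.~\eqref{E:slope} hold along the segment $x'+\varepsilon v$ toward $\supp(\mathbb P)$; differentiating in $\varepsilon$ then yields the directional derivative $\langle v,\nabla_x f^*(x')\rangle$, and only after that does the $\mu$-penalty condition $\|\nabla_x f^*(x')\|=\langle v,\nabla_x f^*(x')\rangle$ force $\nabla_x f^*(x')\parallel v$ via the equality case of Cauchy--Schwarz. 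This same identity (packaged as Eq.~\eqref{E:slope_p_last}) is also what kills one of the two pieces of term~\eqref{E:big_baddy_2} in the Requirement~\ref{R:fo_divergence} computation; the other piece is the ``$\approx 0$'' term that becomes exact precisely under the hypothesis $f^*(x)-f^*(x')=c\|x-x'\|$. Your sketch has all the ingredients, but make sure the role of $\mathbb Q'_\theta$ versus $\mathbb Q_\theta$ is explicit, since without the stretching none of the directional-derivative steps are available.
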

 \begin{proof}
  See Appendix, Section \ref{S:proof_of_things}
 \end{proof}

 Note that adding a gradient penalty, other than being a necessary step for the WGAN-GP \cite{gulrajani2017improved}, 
 DRAGAN \cite{dragan} and Consensus Optimization GAN \cite{mescheder2017numerics},
 has also been shown empirically to improve the performance the original GAN method (Eq.\ \ref{E:goodfellow_gan}),
 see \cite{fedus2017many}. In addition, using stricter assumptions on the critic, \cite{nagarajan2017gradient} provides a theoretical justification for 
 use of a gradient penalty in GAN learning.
 The analysis of Theorem \ref{T:fogan_theorem} in Appendix, Section \ref{S:proof_of_things} provides a theoretical understanding
 why in the Penalized Wasserstein GAN setting adding a gradient penalty causes $\nabla_\theta\mathbb E_{\mathbb Q'_\theta}[f^*]$
 to be an update rule that points in the direction of steepest descent, and may provide a path for other GAN methods to make similar assurances.

 \section{Experimental Results}\label{S:experiments}

 \subsection{Image Generation}
 We begin by testing the FOGAN on the CelebA image generation task \cite{liu2015deep},
 training a generative model with the DCGAN  architecture \cite{radford2015unsupervised} and obtaining Fr\'echet Inception Distance (FID) scores \cite{heusel2017gans}
 competitive with state of the art methods without doing a tuning parameter search.
 Similarly, we show competitive results on LSUN \cite{yu15lsun} and CIFAR-10 \cite{krizhevsky2009learning}.
 See Table \ref{Ta:results}, Appendix \ref{SS:appendix_celeba_words} and released code
 \footnote{\scriptsize{\url{https://github.com/zalandoresearch/first_order_gan}}}.

 \subsection{One Billion Word}
 Finally, we use the First Order Penalized Wasserstein Divergence to train a character level generative language model on the
 One Billion Word Benchmark \cite{chelba2013one}. In this setting, a 1D CNN deterministically transforms a latent
 vector into a $32\times C$ matrix, where $C$ is the number of possible characters.
 A softmax nonlinearity is applied to this output, and given to the critic.
 Real data is one-hot encoding of 32 character texts sampled from the true data.
 
  \begin{table}
\caption{Comparison of different GAN methods for image and text generation.
We measure performance with respect to the FID on the image datasets
and JSD between $n$-grams for text generation.}\label{Ta:results}
\centering
\begin{scriptsize}
  \begin{tabular}{cccccc}\toprule
  Task & BEGAN & DCGAN & Coulomb & WGAN-GP & FOGAN \\ \midrule
  CelebA & 28.5 & 12.5 & 9.3 & 4.2 & 6.0 \\
  LSUN & 112 & 57.5 & 31.2 &  9.5 & 11.4 \\
  CIFAR-10 & - & - & 27.3 & 24.8 & 27.4 \\
  4-gram & -    & -    & -   & $.220\pm .006$ & $.226\pm .006$ \\
  6-gram & - & - & -   & $.573\pm .009$ & $.556 \pm .004$ \\ \bottomrule
  \end{tabular}
 \end{scriptsize}
 \end{table}

 We conjecture this is an especially difficult task for GANs, since data in the target distribution lies in just a few corners of
 the $32\times C$ dimensional unit hypercube. As the generator is updated, it must push mass from one corner to another, passing through
 the interior of the hypercube far from any real data. Methods other than Coulomb GAN \cite{unterthiner2017coulomb}
 WGAN-GP \cite{gulrajani2017improved,heusel2017gans} and the Sobolev GAN \cite{mroueh2017sobolev}
 have not been shown to be successful at this task.
 
 We use the same setup as in both \cite{gulrajani2017improved,heusel2017gans}
 with two differences.
 First, we train to minimize our divergence
 from Definition \ref{D:fo_divergence} with parameters $\lambda=0.1$ and $\mu=1.0$ instead of the WGAN-GP divergence.
 Second, we use batch normalization in the generator, both for training our FOGAN method and the benchmark WGAN-GP;
 we do this because batch normalization improved performance and stability of both models.
 
 As with \cite{gulrajani2017improved,heusel2017gans} we use the Jensen-Shannon-divergence (JSD) between $n$-grams from the model and the real world distribution
 as an evaluation metric.
 The JSD is estimated by sampling a finite number of 32 character vectors,
 and comparing the distributions of the $n$-grams from said samples and true data.
 This estimation is biased; smaller samples result in larger JSD estimations.
 A Bayes limit results from this bias;
 even when samples are drawn from real world data and compared with real world data, small sample sizes results in large JSD estimations.
 In order to detect performance difference when training with the FOGAN and WGAN-GP, a low Bayes limit is necessary.
 Thus, to compare the methods, we sampled $6400$ 32 character vectors in contrast with
 the $640$ vectors sampled in past works. Therefore, the JSD values in those papers are higher than the results here.
 
 For our experiments we trained both models for $500,000$ iterations in $5$ independent runs, estimating the JSD between $6$-grams
 of generated and real world data every $2000$ training steps, see Figure \ref{F:jsd_6}.
 The results are even more impressive when
 aligned with wall-clock time. Since in WGAN-GP training an extra point between real and generated distributions must be sampled, it is slower
 than the FOGAN training; see Figure \ref{F:jsd_6}
 and observe the significant ($2\sigma$) drop in estimated JSD.
 
 \begin{figure}
\begin{minipage}{\linewidth}
\centering
\includegraphics[width=1\linewidth]{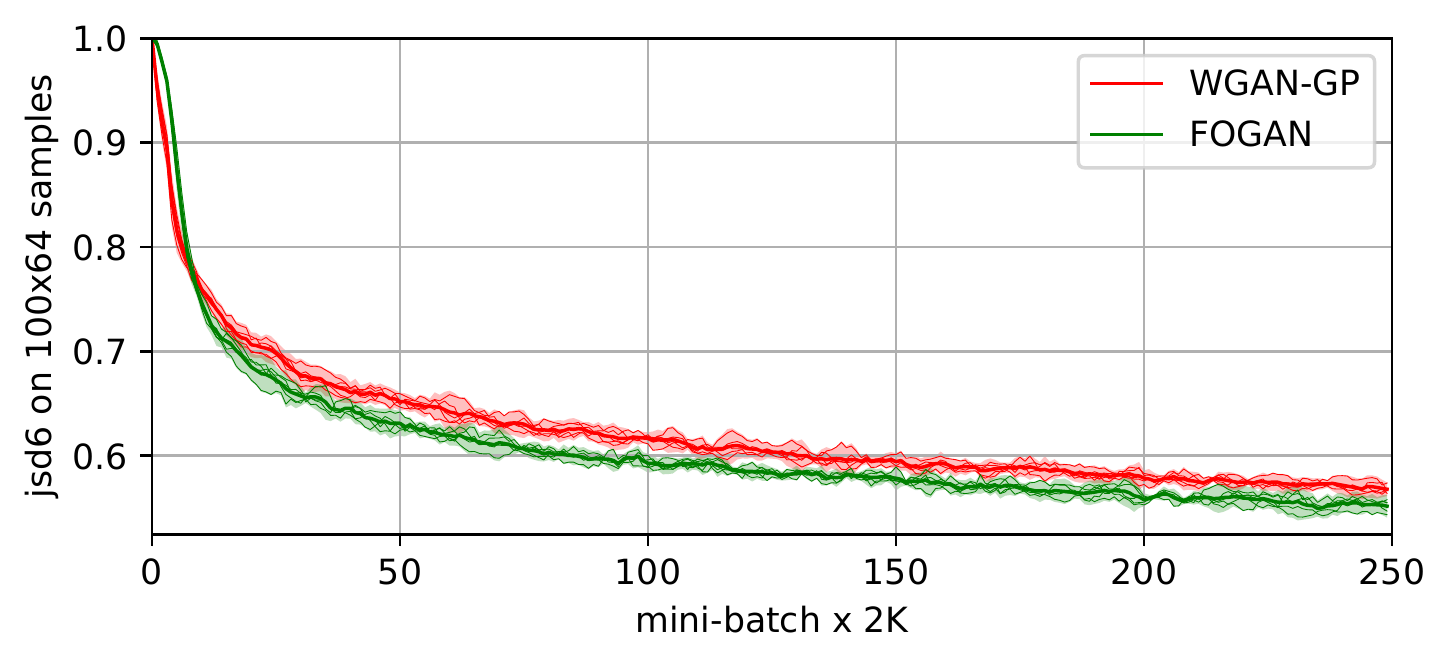}
\end{minipage}
\begin{minipage}{\linewidth}
\centering
\includegraphics[width=1\linewidth]{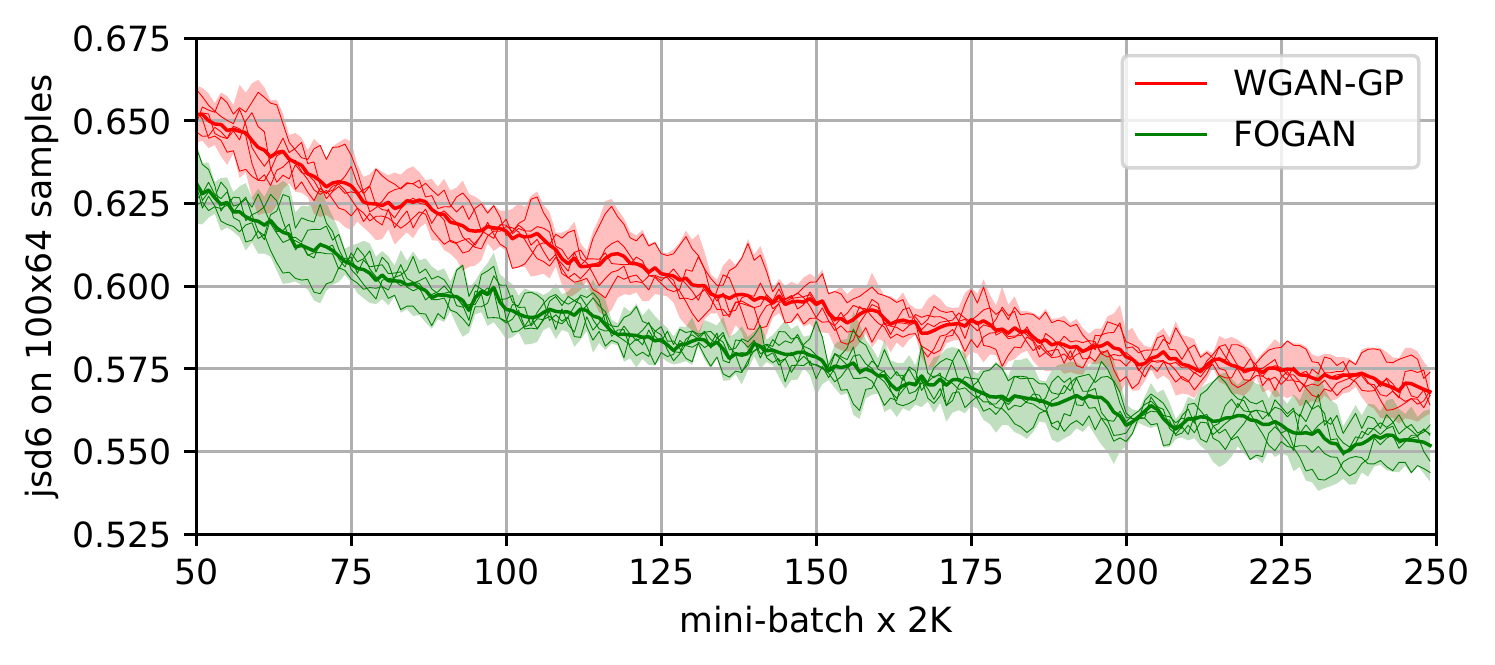}
\end{minipage}
\begin{minipage}{\linewidth}
\centering
\includegraphics[width=\linewidth]{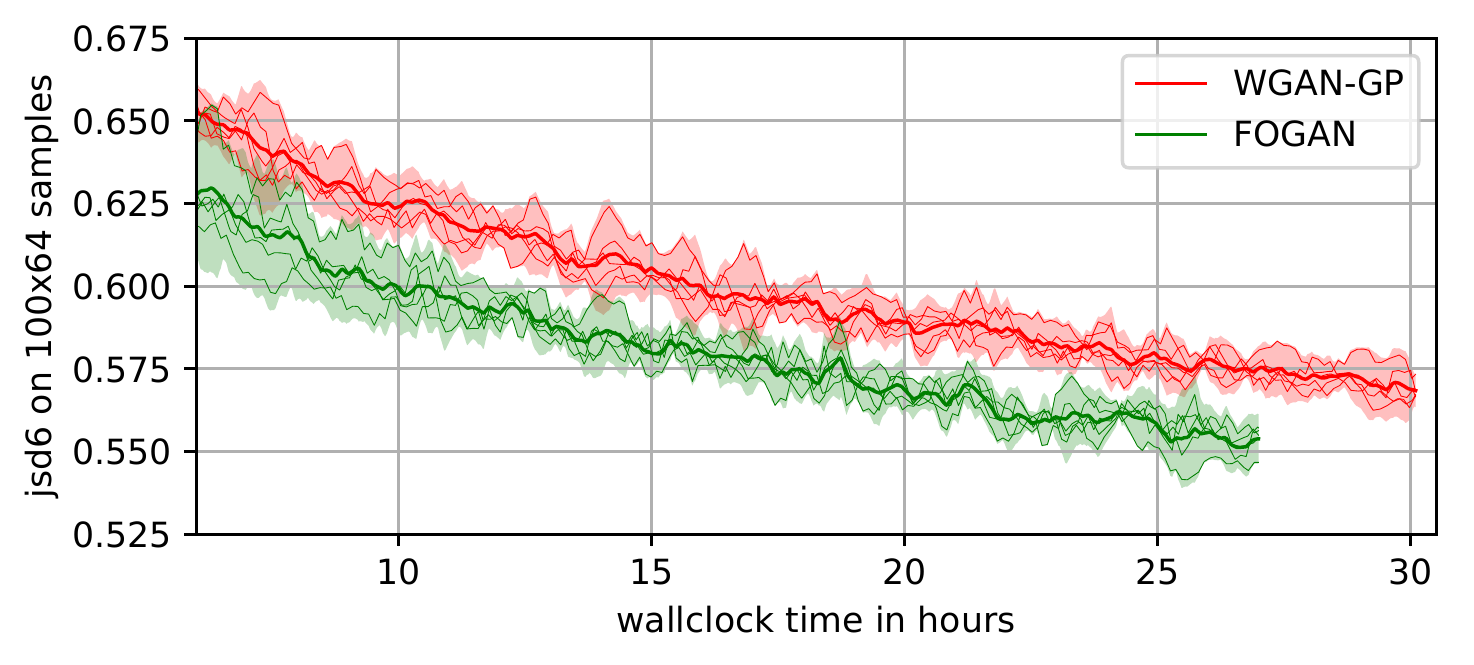}
\end{minipage}
\caption{
Five training runs of both WGAN-GP and FOGAN, with the average of all runs plotted in bold and
the $2\sigma$ error margins denoted by shaded regions.
For easy visualization, we plot the moving average of the last three $n$-gram JSD estimations.
The first two plots both show training w.r.t.\ number of training iterations; the second plot starts at iteration 50.
The last plot show training with respect to wall-clock time, starting after 6 hours of training.
}\label{F:jsd_6}
\end{figure}
\clearpage

 \section*{Acknowledgements}
 This work was supported by Zalando SE with Research Agreement 01/2016.
 
 \bibliography{bibi}
 \bibliographystyle{icml2018_style/icml2018}
 \appendix
 \onecolumn
 \section{Proof of Things}\label{S:proof_of_things}

\begin{proof}[Proof of Theorem \ref{T:pwgan_theorem}]
 The proof of this theorem is split into smaller lemmas that are proven individually.
 \begin{itemize}
  \item That $\tau_P$ is a strict adversarial divergence which is equivalent to $\tau_W$ is proven in Lemma \ref{L:equavent},
  thus showing that $\tau_P$ fulfills Requirement \ref{R:nonzero}.
  \item $\tau_P$ fulfills Requirement \ref{R:convex_admissible} by design.
  \item The existence of an optimal critic in $\oc_{\tau_P}(\mathbb P,\mathbb Q)$ follows directly from Lemma \ref{L:existence}.
  \item That there exists a critic $f^*\in\oc_{\tau_P}(\mathbb P,\mathbb Q)$ that fulfills Eq.\ \ref{E:first_order_assumption} is because
  Lemma \ref{L:existence} ensures that a continuous differentiable $f^*$ exists in $\oc_{\tau_P}(\mathbb P,\mathbb Q)$ which
  fulfills Eq.\ \ref{E:slope_strict}. Because Eq.\ \ref{E:slope_strict} holds for $f^*\in C(X)$,
  the same reasoning as the end of the proof of Lemma \ref{L:first_order} can be used to show Requirement \ref{R:fo_divergence}
 \end{itemize}
\end{proof}

We prepare by showing a few basic lemmas used in the remaining proofs

\begin{lemma}[concavity of $\tau_P(\mathbb P\Vert\mathbb Q;\cdot)$]\label{L:pen_concave}
 The mapping $C^1(X)\to\mathbb R$, $f\mapsto\tau_P(\mathbb P\Vert\mathbb Q;f)$ is concave.
 \end{lemma}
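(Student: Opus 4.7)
The plan is to decompose $\tau_P(\mathbb P\Vert\mathbb Q;f)$ into three parts and verify concavity term-by-term. The first two terms $\mathbb E_{\mathbb P}[f]$ and $-\mathbb E_{\mathbb Q}[f]$ are linear (hence concave) in $f$, since expectation is a linear operator in the integrand. So the entire question reduces to showing that the penalty term
\[
f\mapsto -\lambda\,\mathbb E_{x\sim\mathbb P,x'\sim\mathbb Q}\!\left[\frac{(f(x)-f(x'))^2}{\Vert x-x'\Vert}\right]
\]
is concave, equivalently, that the expectation inside is convex.

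For each fixed pair $(x,x')$ with $x\neq x'$, the map $f\mapsto f(x)-f(x')$ is a linear functional on $C^1(X)$, so its square $f\mapsto (f(x)-f(x'))^2$ is convex (it is the composition of a convex function $t\mapsto t^2$ on $\mathbb R$ with a linear map). Dividing by the strictly positive constant $\Vert x-x'\Vert$ preserves convexity. The division by $\Vert x-x'\Vert$ is well-defined because Assumption \ref{A:2} guarantees that $\supp(\mathbb P)$ and $\supp(\mathbb Q)$ are disjoint and compact, so $\Vert x-x'\Vert$ is bounded away from $0$ on the product support, and the integrand stays finite.

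Next, I would invoke the standard fact that integration against a positive measure preserves convexity: since $(f,x,x')\mapsto (f(x)-f(x'))^2/\Vert x-x'\Vert$ is convex in $f$ for each $(x,x')$, integrating against the product measure $\mathbb P\otimes\mathbb Q$ yields a convex function of $f$. Multiplying by $-\lambda$ with $\lambda>0$ flips convexity to concavity. Adding the two linear terms (which are simultaneously convex and concave) leaves the sum concave, completing the argument.

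There is no real obstacle here; the only subtlety to be explicit about is the integrability/finiteness of the penalty term, which is immediate from the compactness of the supports and their disjointness under Assumption \ref{A:2}, together with continuity of $f\in C^1(X)$ on the compact set $\supp(\mathbb P)\cup\supp(\mathbb Q)$. The whole proof should amount to a short paragraph verifying these observations.
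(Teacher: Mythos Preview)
Your proposal is correct and follows essentially the same approach as the paper: both note that the first two terms are linear (hence concave) and then verify convexity of the penalty by using the convexity of $t\mapsto t^2$ applied to the linear map $f\mapsto f(x)-f(x')$, integrated against $\mathbb P\otimes\mathbb Q$. The paper writes out the pointwise inequality $(\gamma a+(1-\gamma)b)^2\le\gamma a^2+(1-\gamma)b^2$ explicitly rather than invoking ``integration preserves convexity,'' but this is the same argument at a slightly different level of abstraction; your extra remark on finiteness via Assumption~\ref{A:2} is a harmless clarification the paper omits.
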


 \begin{proof}
  The concavity of $f\mapsto\mathbb E_{x\sim\mathbb P}[f(x)]-\mathbb E_{x'\sim\mathbb Q}[f(x')]$ is trivial. Now consider $\gamma\in(0,1)$, then
  \begin{align*}
   &\mathbb E_{x\sim\mathbb P,x'\sim\mathbb Q}\left[\frac{(\gamma (f(x)-f(x')) + (1-\gamma)(\hat f(x)-\hat f(x')))^2}{\Vert x-x'\Vert}\right] \\
   \leq &\mathbb E_{x\sim\mathbb P,x'\sim\mathbb Q}\left[\frac{\gamma (f(x)-f(x'))^2 + (1-\gamma)(\hat f(x)-\hat f(x'))^2}{\Vert x-x'\Vert}\right] \\
   = &\gamma\mathbb E_{x\sim\mathbb P,x'\sim\mathbb Q}\left[\frac{(f(x)-f(x'))^2}{\Vert x-x'\Vert}\right]
   +(1-\gamma)\mathbb E_{x\sim\mathbb P,x'\sim\mathbb Q}\left[\frac{(\hat f(x)-\hat f(x'))^2}{\Vert x-x'\Vert}\right],
  \end{align*}
  thus showing concavity of $\tau_P(\mathbb P\Vert\mathbb Q;\cdot)$.
 \end{proof}

\begin{lemma}[necessary and sufficient condition for maximum]\label{L:necessary_for_maximum}
  Assume $\mathbb P,\mathbb Q\in\mathcal P(X)$ fulfill assumptions \ref{A:1} and \ref{A:2}. Then for any $f\in \oc_{\tau_P}(\mathbb P,\mathbb Q)$ it must hold that
  \begin{equation}\label{E:slope}
  P_{x'\sim\mathbb Q}\left(\mathbb E_{x\sim\mathbb P}\left[\frac{f(x)-f(x')}{\Vert x-x'\Vert}\right] = \frac{1}{2\lambda}\right)=1
  \end{equation}
  and
  \begin{equation}\label{E:slope_p}
  P_{x\sim\mathbb P}\left(\mathbb E_{x'\sim\mathbb Q}\left[\frac{f(x)-f(x')}{\Vert x-x'\Vert}\right] = \frac{1}{2\lambda}\right)=1.
  \end{equation}
  Further, if $f\in C^1(X)$ and fulfills Eq.\ \ref{E:slope} and \ref{E:slope_p}, then $f\in \oc_{\tau_P}(\mathbb P,\mathbb Q)$
 \end{lemma}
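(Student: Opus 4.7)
My plan is to derive both conditions from the Gateaux (first-order) variational condition, exploiting the concavity from Lemma \ref{L:pen_concave}.

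First, I would establish that for $f \in \mathcal{F} = C^1(X)$ optimal and any perturbation $h \in C^1(X)$, the directional derivative $\frac{d}{dt}\big|_{t=0}\tau_P(\mathbb{P}\Vert\mathbb{Q}; f + th)$ must vanish. Because $\mathbb{P}$ and $\mathbb{Q}$ have compact disjoint supports by Assumption \ref{A:2}, the denominator $\Vert x - x'\Vert$ is bounded away from $0$ on $\supp(\mathbb{P}) \times \supp(\mathbb{Q})$, so differentiation under the integral sign is routinely justified and yields
\begin{align*}
0 &= \mathbb{E}_{\mathbb{P}}[h(x)] - \mathbb{E}_{\mathbb{Q}}[h(x')] - 2\lambda\,\mathbb{E}_{\mathbb{P}\otimes\mathbb{Q}}\!\left[\tfrac{(f(x)-f(x'))(h(x)-h(x'))}{\Vert x-x'\Vert}\right].
\end{align*}

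Next, I would split the double integral via Fubini and regroup to obtain
\begin{align*}
0 &= \mathbb{E}_{x\sim\mathbb{P}}\!\left[h(x)\!\left(1 - 2\lambda\,\mathbb{E}_{x'\sim\mathbb{Q}}\!\left[\tfrac{f(x)-f(x')}{\Vert x-x'\Vert}\right]\right)\right] \\
&\quad - \mathbb{E}_{x'\sim\mathbb{Q}}\!\left[h(x')\!\left(1 - 2\lambda\,\mathbb{E}_{x\sim\mathbb{P}}\!\left[\tfrac{f(x)-f(x')}{\Vert x-x'\Vert}\right]\right)\right].
\end{align*}
The key step is to decouple these two integrals using the disjointness of supports. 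Since $\supp(\mathbb{P})$ and $\supp(\mathbb{Q})$ are compact and disjoint in the normal space $\mathbb{R}^n$, there exist disjoint open neighborhoods $U_{\mathbb{P}} \supset \supp(\mathbb{P})$ and $U_{\mathbb{Q}} \supset \supp(\mathbb{Q})$ and a smooth bump function which is $1$ on one support and $0$ on the other. For any $\varphi \in C^1(X)$, applying the identity with $h = \varphi$ restricted via such a bump to $U_{\mathbb{P}}$ kills the $\mathbb{Q}$-integral and yields
\[
\mathbb{E}_{x\sim\mathbb{P}}\!\left[\varphi(x)\!\left(1 - 2\lambda\,\mathbb{E}_{x'\sim\mathbb{Q}}\!\left[\tfrac{f(x)-f(x')}{\Vert x-x'\Vert}\right]\right)\right] = 0,
\]
and since $\varphi$ was arbitrary on $\supp(\mathbb{P})$, the du Bois–Reymond/fundamental lemma of calculus of variations implies Eq.\ \ref{E:slope_p} holds $\mathbb{P}$-a.s. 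The symmetric argument on $U_{\mathbb{Q}}$ delivers Eq.\ \ref{E:slope}.

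For sufficiency, concavity of $\tau_P(\mathbb{P}\Vert\mathbb{Q};\cdot)$ from Lemma \ref{L:pen_concave} implies that any critical point in $C^1(X)$ is a global maximum over $C^1(X)$: if $f$ satisfies Eqs.\ \ref{E:slope} and \ref{E:slope_p}, then by reversing the Fubini/regrouping argument the Gateaux derivative at $f$ vanishes in every direction $h$, and concavity promotes this stationarity to global optimality, so $f \in \oc_{\tau_P}(\mathbb{P},\mathbb{Q})$. The main obstacle I anticipate is the cleanly localizing test functions via smooth bumps while ensuring the resulting $h$ still lies in $C^1(X)$ (rather than only in $C^1_c$); this is handled by the compact-disjoint-support hypothesis, which gives the positive separation needed to build $C^1(X)$ extensions of any smooth test function on $\supp(\mathbb{P})$ or $\supp(\mathbb{Q})$.
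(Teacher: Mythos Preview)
Your proposal is correct and follows essentially the same route as the paper: both arguments compute the Gateaux derivative of $\tau_P(\mathbb P\Vert\mathbb Q;\cdot)$, exploit the disjointness of supports to test separately against functions supported near $\supp(\mathbb P)$ and near $\supp(\mathbb Q)$, and invoke the concavity of Lemma~\ref{L:pen_concave} to turn the first-order condition into a necessary-and-sufficient characterization. The only cosmetic difference is that the paper perturbs with two independent parameters $(\varepsilon u_1,\rho u_2)$ whose supports are already chosen disjoint from $\supp(\mathbb Q)$ and $\supp(\mathbb P)$ respectively, whereas you take a single direction $h$ and then localize via smooth bump functions; these are equivalent formulations of the same variational argument.
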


 \begin{proof}
  Since in Lemma \ref{L:pen_concave} it was shown that the the mapping $f\mapsto\tau_P(\mathbb P\Vert\mathbb Q,f)$ is concave,
  $f\in\oc_\tau(\mathbb P,\mathbb Q)$ if and only if  $f\in C^1(X)$ and $f$ is a local maximum of $\tau_P(\mathbb P\Vert\mathbb Q;\cdot)$.
  This is equivalent to saying that all $u_1,u_2\in C^1(X)$ with $\supp(u_1)\cap\supp(\mathbb Q)=\emptyset$ and $\supp(u_2)\cap\supp(\mathbb P)=\emptyset$ it holds
  \[\nabla_{(\varepsilon,\rho)}\left[\mathbb E_{\mathbb P}[f+\varepsilon u_1]-\mathbb E_{\mathbb Q}[f+\rho u_2]
  -\lambda\mathbb E_{x\sim\mathbb P,x'\sim\mathbb Q}\left[\frac{((f+\varepsilon u_1)(x)-(f+\rho u_2)(x'))^2}{\Vert x-x'\Vert}\right]\right]
  \bigg|_{\varepsilon=0,\rho=0}=0\]
  which holds if and only if
  \[\mathbb E_{x\sim\mathbb P}\left[u_1(x)\left(1-2\lambda\mathbb E_{x'\sim\mathbb Q}\left[\frac{(f(x)-f(x'))}{\Vert x-x'\Vert}\right]\right)\right]=0\]
  and
  \[\mathbb E_{x'\sim\mathbb Q}\left[u_2(x')\left(1-2\lambda\mathbb E_{x\sim\mathbb P}\left[\frac{(f(x)-f(x'))}{\Vert x-x'\Vert}\right]\right)\right]=0\]
  proving that Eq.\ \ref{E:slope} and \ref{E:slope_p} are necessary and sufficient.
 \end{proof}

 \begin{lemma}\label{L:existence}
  Let $\mathbb P,\mathbb Q\in\mathcal P(X)$ be probability measures fulfilling Assumptions \ref{A:1} and \ref{A:2}.
  Define an open subset of $X$, $\Omega\subseteq X$, such that $\supp(\mathbb Q)\subseteq\Omega$ and $\inf_{x\in\supp(\mathbb P),x'\in\Omega}\Vert x-x'\Vert>0$.
  Then there exists a $f\in\mathcal F=C^1(X)$ such that
  \begin{equation}\label{E:slope_strict}
   \forall x'\in\Omega:\quad\mathbb E_{x\sim\mathbb P}\left[\frac{f(x)-f(x')}{\Vert x-x'\Vert}\right]=\frac {1}{2\lambda}
  \end{equation}
  and
  \begin{equation}\label{E:slope_strict_p}
   \forall x\in\supp(\mathbb P):\quad\mathbb E_{x'\sim\mathbb Q}\left[\frac{f(x)-f(x')}{\Vert x-x'\Vert}\right]=\frac {1}{2\lambda}
  \end{equation}
  and $\tau_P(\mathbb P\Vert\mathbb Q;f)=\tau_P(\mathbb P\Vert\mathbb Q)$.
 \end{lemma}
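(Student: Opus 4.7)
My plan is to recast the problem as a strictly concave variational problem on a Hilbert space, then use the explicit integral formula appearing in \ref{E:slope_strict} both to upgrade the resulting almost-everywhere first-order conditions to the strict pointwise conditions on $\Omega$ and to obtain $C^1(X)$ regularity.

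First, I would observe that $\tau_P(\mathbb P\Vert\mathbb Q;f)$ depends on $f$ only through its restrictions $g := f|_{\supp(\mathbb P)}$ and $h := f|_{\supp(\mathbb Q)}$, so it suffices to maximize the reduced functional
\[\Phi(g,h) := \int g\,d\mathbb P - \int h\,d\mathbb Q - \lambda \iint \frac{(g(x)-h(x'))^2}{\Vert x-x'\Vert}\, d(\mathbb P\otimes\mathbb Q)\]
on the Hilbert space $L^2(\mathbb P) \oplus L^2(\mathbb Q)$. By Assumption \ref{A:2}, $\Vert x-x'\Vert$ is bounded above and below by positive constants for $(x,x') \in \supp(\mathbb P) \times \supp(\mathbb Q)$, so $\Phi$ is a well-defined continuous concave functional (the concavity argument is identical to Lemma \ref{L:pen_concave}). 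Since $\Phi$ is invariant under adding a common constant to both $g$ and $h$, I restrict to the slice $\{\int g\,d\mathbb P = 0\}$. On this slice, expanding $(g(x)-h(x'))^2$ and using the uniform lower bound on $1/\Vert x-x'\Vert$ shows that the penalty dominates $\Vert g\Vert_{L^2(\mathbb P)}^2 + \Vert h\Vert_{L^2(\mathbb Q)}^2$ while the linear part grows only linearly; hence $\Phi$ is coercive and strictly concave, yielding a unique maximizer $(g^*,h^*)$ by standard convex analysis.

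Second, the Euler--Lagrange conditions at $(g^*,h^*)$, derived by varying $g$ and $h$ separately exactly as in the proof of Lemma \ref{L:necessary_for_maximum}, yield the identities \ref{E:slope} and \ref{E:slope_p} for $g^*, h^*$, initially only almost everywhere. Both sides of each identity are continuous: the kernels $K_P(x') := \mathbb E_{\mathbb P}[1/\Vert x-x'\Vert]$ and $K_Q(x) := \mathbb E_{\mathbb Q}[1/\Vert x-x'\Vert]$ are smooth by disjointness of the compact supports, and $g^*, h^* \in L^1$, so the identities hold pointwise on $\supp(\mathbb P)$ and $\supp(\mathbb Q)$. To get the stronger pointwise condition \ref{E:slope_strict} on all of $\Omega$ together with $C^1(X)$ regularity, I define
\[f(x') := \frac{1}{K_P(x')}\left(\mathbb E_{x \sim \mathbb P}\left[\frac{g^*(x)}{\Vert x-x'\Vert}\right] - \frac{1}{2\lambda}\right) \quad \text{for } x' \in \Omega,\]
and $f(x) := g^*(x)$ for $x \in \supp(\mathbb P)$. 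Because $\Omega$ is bounded away from $\supp(\mathbb P)$, this formula is smooth in $x'$, and \ref{E:slope_strict} holds by construction on all of $\Omega$; since $f = h^*$ on $\supp(\mathbb Q) \subseteq \Omega$ by the previous step, condition \ref{E:slope_strict_p} follows as well. A standard $C^1$ extension (Whitney-type, using the separation of $\supp(\mathbb P)$ from $\Omega$) then promotes $f$ to an element of $C^1(X)$, and Lemma \ref{L:necessary_for_maximum} certifies $f \in \oc_{\tau_P}(\mathbb P,\mathbb Q)$.

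The step I expect to be the most delicate is the coercivity check: it requires exploiting the strict positive lower bound on $1/\Vert x-x'\Vert$ coming from Assumption \ref{A:2} together with the expansion $\iint (g(x)-h(x'))^2\,d(\mathbb P \otimes \mathbb Q) = \mathrm{Var}_{\mathbb P}(g) + \mathrm{Var}_{\mathbb Q}(h) + (\bar g - \bar h)^2$ to confirm that the penalty dominates the $L^2$ norms once the normalization $\int g\,d\mathbb P = 0$ is imposed. Everything after that is bookkeeping: strict concavity gives existence and uniqueness on the slice, continuity of the integral kernels upgrades a.e.\ identities to pointwise ones, and the integral representation gives the desired $C^1$ regularity and strict form of \ref{E:slope_strict} for free.
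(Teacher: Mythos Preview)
Your approach is correct and genuinely different from the paper's. The paper obtains the optimal critic via a Banach fixed-point argument: it rewrites Eq.~\ref{E:slope_strict} and Eq.~\ref{E:slope_strict_p} as a fixed-point equation $T(f)=f$ for an integral operator $T$ on a normalized subspace of $C^1(\supp(\mathbb P)\cup\Omega)$, argues that $T$ is a contraction in the sup norm, and then extends the fixed point to all of $X$. You instead argue variationally: coercivity and strict concavity of $\Phi$ on the mean-zero slice of $L^2(\mathbb P)\oplus L^2(\mathbb Q)$ yield a unique maximizer directly, and the Euler--Lagrange identities then double as a regularity bootstrap, since solving each of $g^*,h^*$ in terms of the other exhibits both as smooth functions via the bounded integral kernels. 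Your route is cleaner and relies only on standard convex-analysis machinery; the paper's route is more constructive (the iteration $f\mapsto T(f)$ is in principle an algorithm), but the verification that its contraction constant is strictly less than $1$ is delicate and only sketched there. One minor simplification on your side: you do not need a Whitney-type extension at the end, since both integral formulas for $f$ are already smooth on disjoint open neighborhoods of $\supp(\mathbb P)$ and $\Omega$ respectively, so a smooth partition of unity subordinate to that separation already produces a global $C^1(X)$ function with the required restrictions.
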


 \begin{proof}
  Since $\tau(\mathbb P\Vert\mathbb Q;f)=\tau(\mathbb P\Vert\mathbb Q;f+c)$ for any $c\in\mathbb R$ and is only affected by
  values of $f$ on $\supp(\mathbb P)\cup\Omega$ we first start by considering
  \[\mathcal F=\left\{f\in C^1(\supp(\mathbb P)\cup\Omega)\mid \mathbb E_{x\sim\mathbb P,x'\sim\mathbb Q}\left[\frac{f(x')}{\Vert x-x'\Vert}\right]=0\right\}\]
  Observe that Eq.\ \ref{E:slope_strict} holds if
 \[x'\in\Omega:\quad f(x')=\frac{\mathbb E_{x\sim\mathbb P}[\frac{f(x)}{\Vert x-x'\Vert}]-\frac{1}{2\lambda}}{\mathbb E_{x\sim\mathbb P}[\frac{1}{\Vert x-x'\Vert}]}\]
 and similarly for Eq.\ \ref{E:slope_strict_p}
 \[\forall x\in\supp(\mathbb P):\quad f(x)=\frac{\mathbb E_{x'\sim\mathbb Q}[\frac{f(x')}{\Vert x-x'\Vert}]+\frac{1}{2\lambda}}{\mathbb E_{x'\sim\mathbb Q}[\frac{1}{\Vert x-x'\Vert}]}.\]
 Now it's clear that if the mapping $T:\mathcal F\to\mathcal F$ defined by
 \begin{equation}\label{E:T_def}
   T(f)(x):=\begin{cases}
   \frac{\mathbb E_{x'\sim\mathbb Q}[\frac{f(x')}{\Vert x-x'\Vert}]+\frac{1}{2\lambda}}{\mathbb E_{x'\sim\mathbb Q}[\frac{1}{\Vert x-x'\Vert}]} & x\in\supp(\mathbb P) \\
   \frac{\mathbb E_{x'\sim\mathbb P}[\frac{f(x')}{\Vert x-x'\Vert}]-\frac{1}{2\lambda}}{\mathbb E_{x'\sim\mathbb P}[\frac{1}{\Vert x-x'\Vert}]} & x\in\Omega
  \end{cases}
 \end{equation}
  admit a fix point $f^*\in\mathcal F$, i.e.\ $T(f^*)=f^*$, then $f^*$ is a solution to Eq.\ \ref{E:slope_strict} and \ref{E:slope_strict_p},
  and with that a solution to Eq.\ \ref{E:slope} and \ref{E:slope_p} and $\tau_P(\mathbb P\Vert\mathbb Q;f^*)=\tau_P(\mathbb P\Vert\mathbb Q)$.

  Define the mapping $S:\mathcal F\to\mathcal F$ by
  \[S(f)(x)=\frac{f(x)}{2\lambda\mathbb E_{\tilde x\sim\mathbb P,x'\sim\mathbb Q}\left[\frac{f(\tilde x)-f(x')}{\Vert\tilde x-x'\Vert}\right]}.\]
  Then
  \begin{equation}\label{E:s_equality}
  \mathbb E_{\tilde x\sim\mathbb P,x'\sim\mathbb Q}\left[\frac{S(f)(\tilde x)-S(f)(x')}{\Vert\tilde x-x'\Vert}\right]=\frac{1}{2\lambda}
  \end{equation}
  and
  \[S(S(f))(x)=\frac{S(f)(x)}{2\lambda\mathbb E_{\tilde x\sim\mathbb P,x'\sim\mathbb Q}\left[\frac{S(f)(\tilde x)-S(f)(x')}{\Vert\tilde x-x'\Vert}\right]}=\frac{S(f)(x)}{2\lambda\frac{1}{2\lambda}}=S(f)(x)\]
  making $S$ a projection.
  By the same reasoning, if $\mathbb E_{\tilde x\sim\mathbb P,x'\sim\mathbb Q}\left[\frac{f(\tilde x)-f(x')}{\Vert\tilde x-x'\Vert}\right]=\frac{1}{2\lambda}$
  then $f$ is a fix-point of $S$, i.e.\ $S(f)=f$.
  Assume $f$ is such a function, then by definition  of $T$ in Eq.\ \ref{E:T_def}
  \begin{align*}\mathbb E_{\tilde x\sim\mathbb P,x'\sim\mathbb Q}\left[\frac{T(f)(\tilde x)-T(f)(x')}{\Vert\tilde x-x'\Vert}\right]
   &=\mathbb E_{\tilde x\sim\mathbb P}\left[\mathbb E_{x'\sim\mathbb Q}\left[\frac{T(f)(\tilde x)}{\Vert\tilde x-x'\Vert}\right]\right]
   -\mathbb E_{x'\sim\mathbb Q}\left[\mathbb E_{\tilde x\sim\mathbb P}\left[\frac{T(f)(x')}{\Vert\tilde x-x'\Vert}\right]\right]\\
   &=\mathbb E_{\tilde x\sim\mathbb P}\left[\mathbb E_{x'\sim\mathbb Q}\left[\frac{f(x')}{\Vert\tilde x-x'\Vert}\right]+\frac{1}{2\lambda}\right]
   -\mathbb E_{x'\sim\mathbb Q}\left[\mathbb E_{\tilde x\sim\mathbb P}\left[\frac{f(\tilde x)}{\Vert\tilde x-x'\Vert}\right]-\frac{1}{2\lambda}\right]\\
   &=-\mathbb E_{\tilde x\sim\mathbb P,x'\sim\mathbb Q}\left[\frac{f(\tilde x)-f(x')}{\Vert\tilde x-x'\Vert}\right]+2\frac{1}{2\lambda}\\
   &=\frac{1}{2\lambda}.
  \end{align*}
  Therefore, $S(T(S(f)))=T(S(f))$. We can define $S(\mathcal F)=\{S(f)\mid f\in\mathcal F\}$ and see that $T:S(\mathcal F)\to S(\mathcal F)$.
  Further, since $S(\cdot)$ only multiplies with a scalar, $S(F)\subseteq\mathcal F$.

  Let $f_1,f_2\in S(\mathcal F)$. From Eq.\ \ref{E:s_equality} we get
  \[
  \mathbb E_{x\sim\mathbb P,x'\sim\mathbb Q}\left[\frac{f_1(x')-f_2(x')}{\Vert x-x'\Vert}\right]
   =\mathbb E_{x\sim\mathbb P,x'\sim\mathbb Q}\left[\frac{f_1(x)-f_2(x)}{\Vert x-x'\Vert}\right].
  \]
  Now since for every $f\in\mathcal F$ it holds by design that $\mathbb  E_{x\sim\mathbb P,x'\sim\mathbb Q}\left[\frac{f(x')}{\Vert x-x'\Vert}\right]=0$
  and since $S(\mathcal F)\subseteq\mathcal F$ we see that $f_1,f_2\in S(\mathcal F)$ that
  \begin{equation*}
  \mathbb E_{x\sim\mathbb P,x'\sim\mathbb Q}\left[\frac{f_1(x')-f_2(x')}{\Vert x-x'\Vert}\right]
   =\mathbb E_{x\sim\mathbb P,x'\sim\mathbb Q}\left[\frac{f_1(x)-f_2(x)}{\Vert x-x'\Vert}\right]=0
  \end{equation*}
  Using this with the continuity of $f_1,f_2$, there must exist $x_1\in\supp(\mathbb P)$ with
  \begin{equation*}
  \mathbb E_{x'\sim\mathbb Q}\left[\frac{f_1(x')-f_2(x')}{\Vert x_1-x'\Vert}\right]=0
  \end{equation*}
  With this (and compactness of our domain),
  $\mathbb Q$ must have mass in both positive and negative regions of $f_1-f_2$ and exists a constant $p<1$ such that for all
  $f_1,f_2\in S(\mathcal F)$ it holds
  \begin{equation}\label{E:banach_secret_sauce}
   \sup_{x\in\supp(\mathbb P)}\left|\mathbb E_{x'\sim\mathbb Q}\left[\frac{f_1(x')-f_2(x')}{\Vert x-x'\Vert}\right]\right|
   \leq p\sup_{x\in\supp(\mathbb P)}\mathbb E_{x'\sim\mathbb Q}\left[\frac{1}{\Vert x-x'\Vert}\right]\sup_{x'\in\Omega} |f_1(x')-f_2(x')|.
  \end{equation}

  To show the existence of a fix-point for $T$ in the Banach Space $(\mathcal F,\Vert\cdot\Vert_{\infty})$ we use the Banach fixed-point theorem
  to show that $T$ has a fixed point in the metric space $(S(\mathcal F),\Vert\cdot\Vert_{\infty})$ (remember that $T:S(\mathcal F)\to S(\mathcal F)$ and $S(\mathcal F)\subseteq \mathcal F$).
  If $f_1,f_2\in S(\mathcal F)$ then
  \begin{align*}
  \sup_{x\in\supp(\mathbb P)} |T(f_1)(x)-T(f_2)(x)|
  &=\sup_{x\in\supp(\mathbb P)}\left\vert
  \frac{
  \mathbb E_{x'\sim\mathbb Q}\left[\frac{f_1(x')-f_2(x')}{\Vert x-x'\Vert}\right]
  }{
  \mathbb E_{x'\sim\mathbb Q}\left[\frac{1}{\Vert x-x'\Vert}\right]
  }\right\vert\\
  &\leq p\sup_{x'\in\supp(\mathbb Q)}|f_1(x')-f_2(x')|\quad\text{using Eq.\ \ref{E:banach_secret_sauce}}
  \end{align*}
 The same trick can be used to find some some $q<1$ and show
 \[\sup_{x'\in\Omega} |T(f_1)(x')-T(f_2)(x')|\leq q\sup_{x\in\supp(\mathbb P)}|f_1(x)-f_2(x)|\]
 thereby showing
 \[\Vert T(f_1)-T(f_2) \Vert_\infty<\max(p,q) \Vert f_1-f_2 \Vert_\infty\]
 The Banach fix-point theorem then delivers the existence of a fix-point $f^*\in S(\mathcal F)$ for $T$.

 Finally, we can use the Tietze extension theorem to extend $f^*$
 to all of $X$, thus finding a fix point for $T$ in $C^1(X)$ and proving the lemma.
 \end{proof}

 \begin{lemma}\label{L:equavent}
  $\tau_P$ is a strict adversarial divergence and $\tau_P$ and $\tau_W$ are equivalent.
 \end{lemma}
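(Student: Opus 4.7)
The lemma asserts both that $\tau_P$ is a strict adversarial divergence and that $\tau_P$ is equivalent to $\tau_W$ in the strength preorder of \cite{liu2017approximation}. I would split the proof into the strictness claim and the two directions of equivalence.

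For strictness, note that $\tau_P\geq 0$ (take $f\equiv 0$) and $\tau_P(\mathbb{P}\Vert\mathbb{P})=0$: every critic has vanishing linear part and nonnegative penalty, with $f\equiv 0$ attaining $0$. Hence $\inf_{\mathbb{P}'}\tau_P(\mathbb{P}^*\Vert\mathbb{P}')=0$, attained at $\mathbb{P}^*$. The remaining content is $\mathbb{P}\neq\mathbb{Q}\Rightarrow\tau_P(\mathbb{P}\Vert\mathbb{Q})>0$: pick $f\in C^1(X)$ with $\mathbb{E}_{\mathbb{P}}[f]\neq\mathbb{E}_{\mathbb{Q}}[f]$ (such $f$ exists by density of smooth functions on the compact $X$), note that the Lipschitz constant $L_f<\infty$ bounds the penalty integrand via $(f(x)-f(x'))^2/\|x-x'\|\leq L_f^2\|x-x'\|$ so the penalty is finite, and test $\varepsilon f$: the value is $\varepsilon(\mathbb{E}_{\mathbb{P}}[f]-\mathbb{E}_{\mathbb{Q}}[f])-O(\varepsilon^2)$, positive for small $\varepsilon>0$.

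One direction of equivalence is free from \cite{liu2017approximation}: $\tau_W$ is the weakest strict adversarial divergence, so the strictness above yields that $\tau_P$ is at least as strong as $\tau_W$. The substantive content is the converse, that $\tau_W(\mathbb{P}^*\Vert\mathbb{P}_n)\to 0$ forces $\tau_P(\mathbb{P}^*\Vert\mathbb{P}_n)\to 0$. My plan is to use the scale-invariance of the objective under $f\mapsto\beta f$, which after analytically optimizing the scalar $\beta$ gives the Rayleigh-type identity
\[
\tau_P(\mathbb{P}\Vert\mathbb{Q})=\frac{1}{4\lambda}\sup_{f\in C^1(X)}\frac{(\mathbb{E}_{\mathbb{P}}[f]-\mathbb{E}_{\mathbb{Q}}[f])^2}{\mathbb{E}_{\mathbb{P}\otimes\mathbb{Q}}[(f(x)-f(x'))^2/\|x-x'\|]}.
\]
Since $\tau_P$ is a supremum of functionals continuous in $\mathbb{Q}$ under weak convergence, it is lower semi-continuous, giving $\liminf\tau_P(\mathbb{P}^*\Vert\mathbb{P}_n)\geq 0$ for free. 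I would establish upper semi-continuity at $\mathbb{Q}=\mathbb{P}^*$ by contradiction: suppose $\tau_P(\mathbb{P}^*\Vert\mathbb{P}_{n_k})\geq\delta>0$ along a subsequence; pick near-maximizers $f_k$ normalized so that the denominator equals $1$. Then $\mathbb{E}_{\mathbb{P}^*\otimes\mathbb{P}_{n_k}}[(f_k(x)-f_k(x'))^2]\leq\mathrm{diam}(X)$ bounds $f_k$ in an $L^2$ sense, while the lower bound $|\mathbb{E}_{\mathbb{P}^*}[f_k]-\mathbb{E}_{\mathbb{P}_{n_k}}[f_k]|\geq 2\sqrt{\lambda\delta}$ is inherited from near-optimality. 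Extracting a convergent subsequence and using $\mathbb{P}_{n_k}\to\mathbb{P}^*$ weakly should collapse the numerator to $0$, yielding the contradiction.

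The main obstacle is this final compactness step: the $1/\|x-x'\|$ singularity in the penalty blocks an a priori Lipschitz bound on the $f_k$, so Arzel\`a--Ascoli is not directly available. I expect the technical work to be in translating the $L^2$-bound from the normalization into a uniform modulus of continuity on $\supp(\mathbb{P}^*)\cup\supp(\mathbb{P}_{n_k})$ strong enough to pass to the limit under weak convergence of the marginals.
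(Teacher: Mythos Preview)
Your strictness argument is essentially the paper's: both test $\varepsilon f$ for a smooth $f$ separating $\mathbb P$ from $\mathbb Q$ and observe that the linear term dominates the $O(\varepsilon^2)$ penalty for small $\varepsilon$. The paper obtains such an $f$ from a 1-Lipschitz Wasserstein witness via Stone--Weierstrass, you obtain it by density of $C^1$; both are fine.

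For the hard direction of equivalence ($\tau_W\to 0\Rightarrow\tau_P\to 0$) the paper takes a different and much shorter route than your compactness program. It relaxes $f(x)-f(x')$ to an unrestricted $m(x,x')\in C(X^2)$, so that
\[
\tau_P(\mathbb P\Vert\mathbb Q)\;\le\;\sup_{m}\;\mathbb E_{\mathbb P\otimes\mathbb Q}\Big[m(x,x')\Big(1-\lambda\,\frac{m(x,x')}{\|x-x'\|}\Big)\Big],
\]
and the pointwise optimum is $m(x,x')=\|x-x'\|/(2\lambda)$. Combining this with the first-order conditions for $\tau_P$ (the paper's Lemma on necessary and sufficient conditions for the maximum), the paper argues that every optimal critic for $\tau_P$ is Lipschitz with a constant depending only on $\lambda$, uniformly in $\mathbb P,\mathbb Q$. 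From that, $\tau_P(\mathbb P\Vert\mathbb Q)\le\mathbb E_{\mathbb P}[f^*]-\mathbb E_{\mathbb Q}[f^*]\le L\,\tau_W(\mathbb P\Vert\mathbb Q)$ in one line, with no sequential or compactness argument needed.

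The gap you flag in your own approach is real and, as stated, fatal. The normalization $\mathbb E[(f_k(x)-f_k(x'))^2/\|x-x'\|]=1$ gives no equicontinuity precisely in the regime that matters: when $\mathbb P_{n_k}\to\mathbb P^*$ weakly the supports overlap, the weight $1/\|x-x'\|$ blows up on the diagonal, and the constraint no longer controls oscillation of $f_k$. Without an Arzel\`a--Ascoli-type bound you cannot extract a limit separating the numerator from zero, and the $L^2$ bound you cite is too weak to manufacture one. The missing idea is exactly the uniform Lipschitz control on optimal critics that the paper extracts from the relaxed problem; once you have $\|f^*\|_L\le L(\lambda)$ independent of $\mathbb P,\mathbb Q$, the inequality $\tau_P\le L\,\tau_W$ replaces your entire contradiction argument.
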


 \begin{proof}
  Let $\mathbb P,\mathbb Q\in\mathcal P(X)$ be two probability measures fulfilling Assumptions \ref{A:1} and \ref{A:2} with $\mathbb P\neq\mathbb Q$.
  It's shown in \cite{sriperumbudur2010hilbert} that $\mu=\tau_W(\mathbb P,\mathbb Q)>0$,
  meaning there exists a function $f\in C(X)$, $\Vert f\Vert_L \leq 1$ such that
 \[\mathbb E_{\mathbb P}[f]-\mathbb E_{\mathbb Q}[f]=\mu>0.\]
 The Stone–Weierstrass theorem tells us that there exists a $f'\in C_\infty(X)$ such that $\Vert f-f'\Vert_\infty\leq\frac\mu 4$
 and thus $\mathbb E_{\mathbb P}[f']-\mathbb E_{\mathbb Q}[f']\geq\frac{\mu}{2}$.
 Now consider the function $\varepsilon f'$ with $\varepsilon>0$, it's clear that
 \[\tau_P(\mathbb P\Vert\mathbb Q)\geq\tau_P(\mathbb P\Vert\mathbb Q;\varepsilon f')
 =\varepsilon(\underbrace{\mathbb E_{\mathbb P}[f']-\mathbb E_{\mathbb Q}[f']}_{\geq\frac{\mu}{2}})
 -\varepsilon^2\lambda\mathbb E_{x\sim \mathbb P,x'\sim\mathbb Q}[\frac{(f'(x)-f'(x'))^2}{\Vert x-x'\Vert}]\]
 and so for a sufficiently small $\varepsilon>0$ we'll get $\tau_P(\mathbb P\Vert\mathbb Q;\varepsilon f')>0$
 meaning $\tau_P(\mathbb P\Vert\mathbb Q)>0$ and $\tau_P$ is a strict adversarial divergence.

 To show equivalence, we note that
 \[\tau_P(\mathbb P\Vert\mathbb Q)\leq\sup_{m\in C(X^2)}\mathbb E_{x\sim\mathbb P,x'\sim\mathbb Q}\left[m(x,x')\left(1-\lambda\frac{m(x,x')}{\Vert x-x'\Vert}\right)\right]\]
 therefore for any optimum it must hold $m(x,x')\leq \frac{\Vert x-x'\Vert}{2\lambda}$, and thus (similar to Lemma \ref{L:necessary_for_maximum}) any optimal solution will
 be Lipschitz continuous with a the Lipschitz constant independent of $\mathbb P,\mathbb Q$. Thus $\tau_W(\mathbb P\Vert\mathbb Q)\geq \gamma\tau_P(\mathbb P\Vert\mathbb Q)$ for
 $\gamma>0$, from which we directly get equivalence.
 \end{proof}

 \begin{proof}[Proof of Theorem \ref{T:fogan_theorem}]
 We start by applying Lemma \ref{L:adversary_subset} giving us
 \begin{itemize}
  \item $\oc_{\tau_F}(\mathbb P,\mathbb Q'_{\theta_0})\not=\emptyset$.
  \item For any $\mathbb P,\mathbb Q\in\mathcal P(X)$ fulfilling Assumptions \ref{A:1} and \ref{A:2}, it holds that
  $\tau_F(\mathbb P\Vert\mathbb Q)=\tau_P(\mathbb P\Vert\mathbb Q)$, meaning $\tau_F$ is like $\tau_P$ a strict adversarial divergence
  which is equivalent to $\tau_W$, showing Requirement \ref{R:nonzero}.
  \item $\tau_F$ fulfills Requirement \ref{R:convex_admissible} by design.
  \item Every $f^*\in\oc_{\tau_F}(\mathbb P,\mathbb Q'_{\theta_0})$ is in $\oc_{\tau_P}(\mathbb P,\mathbb Q'_{\theta_0})\subseteq C^1(X)$,
  therefore $f^*$ the gradient $\nabla_\theta\mathbb E_{\mathbb Q_\theta}[f^*]|_{\theta_0}$ exists. Further Lemma \ref{L:first_order}
  shows that the update rule $\nabla_\theta\mathbb E_{\mathbb Q_\theta}[f^*]|_{\theta_0}$ is unique, thus showing Requirement \ref{R:differentiable}.
  \item Lemma \ref{L:first_order} gives us
 every $f^*\in\oc_{\tau_F}(\mathbb P,\mathbb Q'_{\theta_0})$ with the corresponding update rule fulfills Requirement \ref{R:fo_divergence}, thus proving Theorem \ref{T:fogan_theorem}.
 \end{itemize}
 \end{proof}

 Before we can show this theorem, we must prove a few interesting lemmas about $\tau_F$. The following lemma is quite powerful;
 since $\tau_P(\mathbb P\Vert\mathbb Q)=\tau_F(\mathbb P\Vert\mathbb Q)$ and $\oc_{\tau_F}(\mathbb P,\mathbb Q)\subseteq \oc_{\tau_P}(\mathbb P,\mathbb Q)$
 any property that's proven for $\tau_P$ automatically holds for $\tau_F$.

 \begin{lemma}\label{L:adversary_subset}
If let $X\subseteq\mathbb R^n$ and $\mathbb P,\mathbb Q\in \mathcal P(X)$ be probability measures fulfilling Assumptions \ref{A:1} and \ref{A:2}. Then
\begin{enumerate}
 \item there exists $f^*\in\oc_{\tau_P}(\mathbb P,\mathbb Q)$ so that $\tau_F(\mathbb P\Vert\mathbb Q;f^*)=\tau_P(\mathbb P\Vert\mathbb Q;f^*)$,
 \item $\tau_P(\mathbb P\Vert\mathbb Q)=\tau_F(\mathbb P\Vert\mathbb Q)$,
 \item $\emptyset\not=\oc_{\tau_F}(\mathbb P,\mathbb Q)$,
 \item $\oc_{\tau_F}(\mathbb P,\mathbb Q)\subseteq \oc_{\tau_P}(\mathbb P,\mathbb Q)$.
\end{enumerate}
Clain (4) is especially helpful, now anything that has been proven for all $f^*\in\oc_{\tau_P}(\mathbb P,\mathbb Q)$
automatically holds for all $f^*\in\oc_{\tau_F}(\mathbb P,\mathbb Q)$
\end{lemma}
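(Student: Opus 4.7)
The plan is to exploit the observation that the extra penalty term distinguishing $\tau_F$ from $\tau_P$ is a non-negative expectation of a squared quantity, so $\tau_F(\mathbb P \Vert \mathbb Q; f) \leq \tau_P(\mathbb P \Vert \mathbb Q; f)$ for every $f \in C^1(X)$, with equality precisely when, for $\mathbb Q$-almost every $x'$, the norm $\Vert \nabla f(x') \Vert$ coincides with the rational expression inside the square. Once I produce a single critic $f^*$ realizing this equality and simultaneously attaining the $\tau_P$-optimum, claims (2)--(4) will follow from elementary sup-argument bookkeeping.

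The crux is therefore claim (1), and I would attack it by taking $f^* \in \oc_{\tau_P}(\mathbb P, \mathbb Q)$ to be the critic produced by Lemma \ref{L:existence}. That lemma delivers $f^* \in C^1(X)$ satisfying the strong stationarity identity $\mathbb E_{\tilde x \sim \mathbb P}\bigl[(f^*(\tilde x) - f^*(x'))/\Vert \tilde x - x' \Vert\bigr] = 1/(2\lambda)$ for every $x'$ in an open neighborhood $\Omega$ of $\supp(\mathbb Q)$, not merely $\mathbb Q$-almost everywhere. Because $\Omega$ is open and the integrand is smooth in $x'$ (thanks to disjointness in Assumption \ref{A:2} keeping $\Vert \tilde x - x' \Vert$ bounded away from zero), I would differentiate this identity in $x'$ and push the gradient through the expectation. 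A direct computation then yields
\[
-\nabla f^*(x')\,\mathbb E_{\tilde x}\Bigl[\tfrac{1}{\Vert \tilde x - x' \Vert}\Bigr] + \mathbb E_{\tilde x}\Bigl[(\tilde x - x')\tfrac{f^*(\tilde x) - f^*(x')}{\Vert \tilde x - x' \Vert^3}\Bigr] = 0,
\]
which, after solving for $\nabla f^*(x')$ and taking norms, is exactly the condition that zeros out the $\mu$-penalty integrand. Hence $\tau_F(\mathbb P \Vert \mathbb Q; f^*) = \tau_P(\mathbb P \Vert \mathbb Q; f^*) = \tau_P(\mathbb P \Vert \mathbb Q)$, proving (1).

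The remaining parts then fall out as quick corollaries. For (2), the pointwise bound $\tau_F(\cdot\,;f) \leq \tau_P(\cdot\,;f)$ immediately yields $\tau_F(\mathbb P \Vert \mathbb Q) \leq \tau_P(\mathbb P \Vert \mathbb Q)$, while the $f^*$ from (1) gives $\tau_F(\mathbb P \Vert \mathbb Q) \geq \tau_F(\mathbb P \Vert \mathbb Q; f^*) = \tau_P(\mathbb P \Vert \mathbb Q)$, so equality holds. For (3), the same $f^*$ attains $\tau_F(\mathbb P \Vert \mathbb Q)$ by construction, so it lies in $\oc_{\tau_F}(\mathbb P, \mathbb Q)$. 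For (4), any $g \in \oc_{\tau_F}(\mathbb P, \mathbb Q)$ satisfies $\tau_P(\mathbb P \Vert \mathbb Q; g) \geq \tau_F(\mathbb P \Vert \mathbb Q; g) = \tau_F(\mathbb P \Vert \mathbb Q) = \tau_P(\mathbb P \Vert \mathbb Q)$, forcing $g$ to be $\tau_P$-optimal as well.

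The main obstacle I anticipate is the rigorous justification of differentiating under the expectation sign in the argument for (1): one must verify that the relevant integrands and their partial derivatives are dominated by integrable functions uniformly over $x' \in \Omega$. This is where Assumption \ref{A:2} does the heavy lifting, since compactness of $\supp(\mathbb P)$ combined with $\inf_{x \in \supp(\mathbb P),\, x' \in \Omega} \Vert x - x' \Vert > 0$ (which I can arrange by shrinking $\Omega$ if necessary) ensures that $\Vert \tilde x - x' \Vert^{-k}$ stays uniformly bounded for each power $k$ needed, making dominated convergence applicable and the gradient-expectation exchange legitimate.
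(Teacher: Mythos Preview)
Your proposal is correct and follows essentially the same route as the paper: both invoke Lemma \ref{L:existence} to obtain an optimal $\tau_P$-critic satisfying the slope identity on an open neighbourhood $\Omega\supseteq\supp(\mathbb Q)$, differentiate that identity in $x'$ to recover $\nabla f^*(x')$ exactly as the quotient appearing in the $\mu$-penalty, conclude that the extra term vanishes, and then derive (2)--(4) from the pointwise inequality $\tau_F\le\tau_P$ by the same sup-chain bookkeeping. Your treatment is in fact slightly more careful than the paper's in that you flag the dominated-convergence justification for exchanging $\nabla_{x'}$ and $\mathbb E_{\tilde x\sim\mathbb P}$, which the paper leaves implicit.
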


\begin{proof}
For convenience define
\[G(\mathbb P,\mathbb Q;f):=\mathbb E_{x'\sim\mathbb Q}\left[\left(\Vert\nabla_x f(x)\big|_{x'}\Vert-
  \frac{\left\Vert\mathbb E_{\tilde x\sim\mathbb P}[(\tilde x- x')\frac{f(\tilde x)-f(x')}{\Vert x'-\tilde x\Vert^3}]\right\Vert}
 {\mathbb E_{\tilde x\sim\mathbb P}[\frac{1}{\Vert x'-\tilde x\Vert}]}
  \right)^2\right]\]
($G$ is for gradient penalty) and note that
\[\tau_F(\mathbb P\Vert\mathbb Q;f)=\tau_P(\mathbb P\Vert\mathbb Q;f)-\underbrace{G(\mathbb P,\mathbb Q;f)}_{\geq 0}\]
Therefore it's clear that $\tau_F(\mathbb P\Vert\mathbb Q)\leq\tau_P(\mathbb P\Vert\mathbb Q)$

\noindent\textbf{Claim (1). }
Let $\Omega\subseteq X$ be an open set such that $\supp(\mathbb Q)\subseteq\Omega$ and $\Omega\cap\supp(\mathbb P)=\emptyset$.
Then Lemma \ref{L:existence} tells us there is a $f\in\oc_{\tau_P}(\mathbb P,\mathbb Q)$ (and thus $f\in C^1(X)$) such that
\[\forall x'\in\Omega:\quad\mathbb E_{\tilde x\sim\mathbb P}\left[\frac{f(\tilde x)-f(x')}{\Vert \tilde x-x'\Vert}\right]=\frac{1}{2\lambda}\]
and thus, because $\supp(\mathbb Q)\subseteq\Omega$ open and $f\in C^1(X)$,
\[\forall x'\in\supp(\mathbb Q):\quad\nabla_x\mathbb E_{\tilde x\sim\mathbb P}\left[\frac{f(\tilde x)-f(x)}{\Vert \tilde x-x\Vert}\right]\bigg|_{x'}=0\]
Now taking the gradients with respect to $x'$ gives us
\begin{equation}\label{E:long_equation}
\nabla_x\mathbb E_{\tilde x\sim\mathbb P}\left[\frac{f(\tilde x)-f(x)}{\Vert \tilde x-x\Vert}\right]\bigg|_{x'}
=-\nabla_x f(x)|_{x'}\mathbb E_{\tilde x\sim\mathbb P}\left[\frac{1}{\Vert \tilde x-x'\Vert}\right]
+\mathbb E_{\tilde x\sim\mathbb P}\left[(\tilde x-x')\frac{f(\tilde x)-f(x')}{\Vert \tilde x-x'\Vert^3}\right] 
\end{equation}
meaning
\begin{equation}\label{E:g_zero}
\forall x'\in\supp(\mathbb Q):\quad\nabla_x f(x)|_{x'}=\frac{
\mathbb E_{\tilde x\sim\mathbb P}\left[(\tilde x-x')\frac{f(\tilde x)-f(x')}{\Vert \tilde x-x'\Vert^3}\right]
}{
\mathbb E_{\tilde x\sim\mathbb P}\left[\frac{1}{\Vert \tilde x-x'\Vert}\right]
}
\end{equation}
thus $G(\mathbb P,\mathbb Q;f)=0$, showing the claim.

\noindent\textbf{Claims (2) and (3). }
The claims are a direct result of Claim (1);
for every $\mathbb P,\mathbb Q\in\mathcal P(X)$ there exists a
\[f^*\in\oc_{\tau_P}(\mathbb P,\mathbb Q)\]
such that $G(\mathbb P,\mathbb Q;f^*)=0$. Therefore
\[\tau_P(\mathbb P\Vert\mathbb Q)\geq\tau_F(\mathbb P\Vert\mathbb Q)\geq\tau_F(\mathbb P\Vert\mathbb Q;f^*)=\tau_P(\mathbb P\Vert\mathbb Q;f^*)=\tau_P(\mathbb P\Vert\mathbb Q)\]
thus showing both $\tau_P(\mathbb P\Vert\mathbb Q)=\tau_F(\mathbb P\Vert\mathbb Q)$ and $f^*\in\oc_{\tau_F}(\mathbb P\Vert\mathbb Q)$.

\noindent\textbf{Claim (4). }
This claim is a direct result of claim (2); since $\tau_P(\mathbb P\Vert\mathbb Q)=\tau_F(\mathbb P\Vert\mathbb Q)$,
that means that if $f^*\in\oc_{\tau_F}(\mathbb P\Vert\mathbb Q)$, then
\[\tau_F(\mathbb P\Vert\mathbb Q)=\tau_F(\mathbb P\Vert\mathbb Q;f^*)=\tau_P(\mathbb P\Vert\mathbb Q;f^*)-\underbrace{G(\mathbb P,\mathbb Q;f)}_{\geq 0}\leq \tau_P(\mathbb P\Vert\mathbb Q;f^*)\leq \tau_P(\mathbb P\Vert\mathbb Q)=\tau_F(\mathbb P\Vert\mathbb Q)\]
thus $\tau_P(\mathbb P\Vert\mathbb Q;f^*)= \tau_P(\mathbb P\Vert\mathbb Q)$ and $f^*\in\oc_{\tau_P}(\mathbb P\Vert\mathbb Q)$.
\end{proof}

\begin{lemma}\label{L:first_order_helper}

For every $f^*\in\oc_{\tau_F}(\mathbb P,\mathbb Q')$ it holds

\begin{equation}\label{E:slope_p_last}
 \forall x'\in\supp(\mathbb Q'):\quad \nabla_x\mathbb E_{\tilde x\sim\mathbb P}\left[\frac{f^*(\tilde x)-f^*(x)}{\Vert \tilde x - x\Vert}\right]\bigg|_{x'}=0
\end{equation}

\end{lemma}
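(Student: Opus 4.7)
My plan is to combine the first-order conditions inherited from $\tau_P$-optimality with the vanishing gradient-penalty contribution, and then exploit the explicit stretching construction of $\mathbb Q'$. First, by Lemma~\ref{L:adversary_subset}(4) we have $f^* \in \oc_{\tau_P}(\mathbb P, \mathbb Q')$, so Lemma~\ref{L:necessary_for_maximum} applied with $\mathbb Q'$ in place of $\mathbb Q$ yields
\[
h^*(x') \; := \; \mathbb E_{\tilde x \sim \mathbb P}\!\left[\frac{f^*(\tilde x)-f^*(x')}{\|\tilde x-x'\|}\right] \; = \; \frac{1}{2\lambda}
\]
for $\mathbb Q'$-almost every $x' \in \supp(\mathbb Q')$. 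Because Assumption~\ref{A:2} (together with choosing $\varepsilon$ small enough in the stretch) keeps $\supp(\mathbb Q')$ compact and disjoint from $\supp(\mathbb P)$, the integrand is continuous in $x'$, so this upgrades to $h^* \equiv 1/(2\lambda)$ on all of $\supp(\mathbb Q')$. The proof of Lemma~\ref{L:adversary_subset}(4) also gives $G(\mathbb P,\mathbb Q';f^*)=0$; since the integrand of $G$ is a continuous non-negative square, I again lift to $\|\nabla_x f^*(x')\| = \|B(x')\|/C(x')$ pointwise on $\supp(\mathbb Q')$, where $B,C$ are the averages appearing in the gradient penalty.

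Next I exploit the sampling recipe behind $\mathbb Q'$: for every $y\in\supp(\mathbb Q)$ and $x\in\supp(\mathbb P)$, the whole segment $\{y+\alpha(x-y):\alpha\in[0,\varepsilon]\}$ is contained in $\supp(\mathbb Q')$. Constancy of $h^*$ along any such segment gives $(x-y)\cdot\nabla h^*(x')=0$ for each $x'$ on it. Invoking Eq.~\ref{E:long_equation} to rewrite
\[
\nabla h^*(x') \;=\; B(x') \;-\; C(x')\,\nabla_x f^*(x'),
\]
the segment argument says that the vector $B(x') - C(x')\nabla_x f^*(x')$ is orthogonal to every direction $x-y$ that appears as a tangent vector of $\supp(\mathbb Q')$ at $x'$.

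To close, I use the norm identity. Expanding and substituting $\|\nabla f^*\|=\|B\|/C$,
\[
\|C\nabla f^* - B\|^2 \;=\; 2\|B\|^2 \;-\; 2C\,\nabla f^*\cdot B,
\]
which vanishes iff $\nabla f^* = B/C$ by the Cauchy–Schwarz equality case. The key observation is that $B(x')$ is itself a weighted average of the very direction vectors $\tilde x - x'$ for $\tilde x\in\supp(\mathbb P)$ that serve as tangent vectors to $\supp(\mathbb Q')$ under the stretching (most transparently at boundary points $x'\in\supp(\mathbb Q)$, where segments emanate freely toward every $\tilde x\in\supp(\mathbb P)$). So the orthogonality established above, tested against $B$ itself, forces $\nabla f^*(x')\cdot B(x') = \|B(x')\|^2/C(x')$, hence $C(x')\nabla f^*(x') = B(x')$, i.e.\ $\nabla h^*(x')=0$, which is exactly Eq.~\ref{E:slope_p_last}.

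The main obstacle is this last step and, in particular, propagating it from boundary points $x'\in\supp(\mathbb Q)$ to arbitrary $x'\in\supp(\mathbb Q')$: I need the tangent cone at a stretched point to still contain enough of the direction vectors out of which $B(x')$ is assembled, so that no ``normal'' component of $C\nabla f^* - B$ can survive. The stretching construction is tailored precisely for this — varying the stretch parameter $\alpha$ contributes an independent direction beyond those intrinsic to $\mathbb P$ and $\mathbb Q$, and continuity in $\alpha$ propagates the equality from $\alpha=0$ into the open stretch — but formalising the geometric lemma cleanly is where the real work lies.
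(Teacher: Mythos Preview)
Your approach is essentially the same as the paper's: both use $\tau_P$-optimality to make $h^*\equiv 1/(2\lambda)$ on $\supp(\mathbb Q')$, use $G=0$ to obtain $\|\nabla f^*\|=\|B\|/C$, argue via the stretching that the directional derivative of $h^*$ in the direction $v:=B/\|B\|$ vanishes (the paper in fact unpacks $v$ into its constituent $\tilde x - x'$ directions and uses constancy of $h^*$ along the stretched segments, exactly as you do), and then invoke the Cauchy--Schwarz equality case to conclude $\nabla f^*=B/C$, hence $\nabla h^*=0$. The geometric gap you honestly flag about interior stretch points is equally present in the paper, which simply asserts without further justification that $x'+\varepsilon\,\mathrm{sign}(a)v\in\supp(\mathbb Q')$ ``due to construction of $\mathbb Q'$ and $v$.''
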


\begin{proof}
 Set
\[v=\frac{\mathbb E_{\tilde x\sim\mathbb P}[(\tilde x- x')\frac{f^*(\tilde x)-f^*(x')}{\Vert x'-\tilde x\Vert^3}]}{\left\Vert\mathbb E_{\tilde x\sim\mathbb P}[(\tilde x- x')\frac{f^*(\tilde x)-f^*(x')}{\Vert x'-\tilde x\Vert^3}]\right\Vert}\]
and note that due to construction of $\mathbb Q'$ and $v$, $v$ is such that for almost all $x'\in\supp(\mathbb Q')$ there exists an
$a\neq 0$ where for all $\varepsilon\in[0,|a|]$ it holds $x'+\varepsilon\text{sign}(a) v\in\supp(\mathbb Q')$.

Since $f^*\in C^1(X)$ it holds
\[\frac{d}{d\varepsilon}f^*(x'+\varepsilon v)|_{\varepsilon=0}=\langle v,\nabla_x f^*(x')\rangle.\]
Using Eq.\ \ref{E:slope} we see,
\begin{align*}
&\mathbb E_{x\sim\mathbb P}\left[\frac{f^*(x)-f^*(x'+\varepsilon v)}{\Vert x-(x'+\varepsilon v)\Vert}\right]\\
=&\varepsilon \left\langle v,\nabla_{\hat x}\mathbb E_{x\sim\mathbb P}\left[\frac{f^*(x)-f^*(\hat x)}{\Vert x-\hat x\Vert}\right]\bigg|_{x'}\right\rangle +\mathcal O(\varepsilon^2)\\
=&\frac{\mathbb E_{\tilde x\sim\mathbb P}\left[\left\langle \varepsilon(\tilde x- x'),
\nabla_{\tilde x}\mathbb E_{x\sim\mathbb P}\left[\frac{f^*(x)-f^*(\hat x)}{\Vert x-\hat x\Vert}\right]\Big|_{x'}
\right\rangle\frac{f^*(\tilde x)-f^*(x')}{\Vert x'-\tilde x\Vert^3}\right]}
{\left\Vert\mathbb E_{\tilde x\sim\mathbb P}\left[(\tilde x- x')\frac{f^*(\tilde x)-f^*(x')}{\Vert x'-\tilde x\Vert^3}\right]\right\Vert}
+\mathcal O(\varepsilon^2)\\
=&\frac{\mathbb E_{\tilde x\sim\mathbb P}\left[
\underbrace{\mathbb E_{x\sim\mathbb P}\left[\frac{f^*(x)-f^*(x' + \varepsilon(\tilde x - x'))}{\Vert x-x'+\varepsilon(\tilde x - x')\Vert}\right]}_{=\frac{1}{2\lambda},\text{ Eq. \ref{E:slope} and definition of }\mathbb Q'}
\frac{f^*(\tilde x)-f^*(x')}{\Vert x'-\tilde x\Vert^3}\right]}
{\left\Vert\mathbb E_{\tilde x\sim\mathbb P}\left[(\tilde x- x')\frac{f^*(\tilde x)-f^*(x')}{\Vert x'-\tilde x\Vert^3}\right]\right\Vert}
+\mathcal O(\varepsilon^2)\\
=&\frac{\mathbb E_{\tilde x\sim\mathbb P}\left[
\frac{1}{2\lambda}
\frac{f^*(\tilde x)-f^*(x')}{\Vert x'-\tilde x\Vert^3}\right]}
{\left\Vert\mathbb E_{\tilde x\sim\mathbb P}\left[(\tilde x- x')\frac{f^*(\tilde x)-f^*(x')}{\Vert x'-\tilde x\Vert^3}\right]\right\Vert}
+\mathcal O(\varepsilon^2)
\end{align*}
which means
\begin{align*}
 0&=\frac{d}{d\varepsilon}\mathbb E_{x\sim\mathbb P}\left[\frac{f^*(x)-f^*(x'+\varepsilon v)}{\Vert x-(x'+\varepsilon v)\Vert}\right]\bigg|_{\varepsilon=0}\\
 &=-\frac{d}{d\varepsilon}f^*(x'+\varepsilon v)|_{\varepsilon=0}\mathbb E_{\tilde x\sim\mathbb P}\left[\frac{1}{\Vert \tilde x-x'\Vert}\right]
 -\mathbb E_{x\sim\mathbb P}\left[\langle v, x-x'\rangle\frac{f^*(x)-f^*(x')}{\Vert x-x'\Vert^3}\right].
\end{align*}
Therefore,
\begin{align*}
\frac{d}{d\varepsilon}f^*(x'+\varepsilon v)|_{\varepsilon=0} &= \langle v,\nabla_x f^*(x)|_{x'}\rangle \\
&=\frac{\mathbb E_{\tilde x\sim\mathbb P}\left[\langle v, \tilde x-x'\rangle\frac{f^*(\tilde x)-f^*(x')}{\Vert x-x'\Vert^3}\right]}{\mathbb E_{\tilde x\sim\mathbb P}\left[\frac{1}{\Vert \tilde x-x'\Vert}\right]} \\
&=\frac{\left\langle v,\mathbb E_{\tilde x\sim\mathbb P}[(\tilde x- x')\frac{f^*(\tilde x)-f^*(x')}{\Vert x'-\tilde x\Vert^3}]\right\rangle}{\mathbb E_{\tilde x\sim\mathbb P}\left[\frac{1}{\Vert \tilde x-x'\Vert}\right]} \\
&=\frac{\left\Vert\mathbb E_{\tilde x\sim\mathbb P}[(\tilde x- x')\frac{f^*(\tilde x)-f^*(x')}{\Vert x'-\tilde x\Vert^3}]\right\Vert}{\mathbb E_{\tilde x\sim\mathbb P}\left[\frac{1}{\Vert \tilde x-x'\Vert}\right]}
\end{align*}
Now from the proof of Lemma \ref{L:adversary_subset} claim (4), we know that since $G(\mathbb P,\mathbb Q;f^*)=0$ we get
\[\Vert\nabla_x f^*(x)\big|_{x'}\Vert
 =\frac{\left\Vert\mathbb E_{\tilde x\sim\mathbb P}[(\tilde x- x')\frac{f^*(\tilde x)-f^*(x')}{\Vert x'-\tilde x\Vert^3}]\right\Vert}
 {\mathbb E_{\tilde x\sim\mathbb P}[\frac{1}{\Vert x'-\tilde x\Vert}]}
 =\langle v,\nabla_x f^*(x)|_{x'}\rangle\]
and since for $x\not=0$ and $\Vert w\Vert=1$ it holds $\langle w,x\rangle=\Vert x\Vert\Leftrightarrow w\Vert x\Vert=x$ we discover
$\nabla_x f^*(x)=v\Vert\nabla_x f^*(x)\big|_{x'}\Vert$ and thus
\[\nabla_x f^*(x)\big|_{x'}
=v\Vert\nabla_x f^*(x)\big|_{x'}\Vert
=\frac{\mathbb E_{\tilde x\sim\mathbb P}\left[(\tilde x- x')\frac{f^*(\tilde x)-f^*(x')}{\Vert x'-\tilde x\Vert^3}\right]}
 {\mathbb E_{\tilde x\sim\mathbb P}[\frac{1}{\Vert x'-\tilde x\Vert}]}\]
and with 
\[\nabla_x f^*(x)\big|_{x'}\mathbb E_{\tilde x\sim\mathbb P}[\frac{1}{\Vert x'-\tilde x\Vert}] = \mathbb E_{\tilde x\sim\mathbb P}\left[(\tilde x- x')\frac{f^*(\tilde x)-f^*(x')}{\Vert x'-\tilde x\Vert^3}\right].\]
Plugging this into Eq.\ \ref{E:long_equation} gives us
\[\forall x'\in\supp(\mathbb Q'):\quad \nabla_x\mathbb E_{\tilde x\sim\mathbb P}\left[\frac{f^*(\tilde x)-f^*(x)}{\Vert \tilde x - x\Vert}\right]\bigg|_{x'}=0\]
\end{proof}

\begin{lemma}\label{L:first_order}
 Let $\mathbb P$ and $(\mathbb Q_\theta)_{\theta\in\Theta}$ in $\mathcal P(X)$ and fulfill Assumptions \ref{A:1} and \ref{A:2},
 further let $(\mathbb Q'_\theta)_{\theta\in\Theta}$ be as defined in introduction to Theorem \ref{T:fogan_theorem},
 then for any $f^*\in\oc_{\tau_F}(\mathbb P,\mathbb Q'_\theta)$
 \[\nabla_\theta\tau_F(\mathbb P\Vert\mathbb Q'_\theta)\approx-\frac{1}{2}\nabla_\theta\mathbb E_{x'\sim\mathbb Q'_\theta}[f^*(x')]\]
 thus $f^*$ fulfills Eq.\ \ref{E:first_order_assumption} and $\tau_F$ fulfills Requirement \ref{R:fo_divergence}.
 Further, if $\mathbb P,\mathbb Q_\theta$ are such that there exits an $f$ with $f(x)-f(x')=\Vert x-x'\Vert$ for all $x\in\supp(\mathbb P)$
 and $x'\in\supp(\mathbb Q)$ then
 \[\nabla_\theta\tau_F(\mathbb P\Vert\mathbb Q'_\theta)=-\frac{1}{2}\nabla_\theta\mathbb E_{x'\sim\mathbb Q'_\theta}[f^*(x')]\]
\end{lemma}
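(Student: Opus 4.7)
The plan is to apply the envelope formula Eq.~\ref{E:update_rule} to $\tau_F$ (with $m_2(t)=t$), splitting the penalty $r_{f^*}$ into the Lipschitz part $\lambda(f^*(x)-f^*(x'))^2/\Vert x-x'\Vert$ and the gradient-penalty part $\mu g(x')$, so that with $f^*$ held fixed at its $\theta_0$-optimum
\begin{align*}
\nabla_\theta \tau_F(\mathbb P\Vert\mathbb Q'_\theta)\big|_{\theta_0} = -\nabla_\theta \mathbb E_{\mathbb Q'_\theta}[f^*] - \lambda \nabla_\theta \mathbb E_{\mathbb P\otimes\mathbb Q'_\theta}\!\left[\frac{(f^*(x)-f^*(x'))^2}{\Vert x-x'\Vert}\right] - \mu \nabla_\theta \mathbb E_{\mathbb Q'_\theta}[g].
\end{align*}
I then argue the third term vanishes, evaluate the second, and combine.

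For the gradient-penalty term, Lemma~\ref{L:adversary_subset}(4) places $f^*$ in $\oc_{\tau_P}(\mathbb P,\mathbb Q'_{\theta_0})$, and the proof of Lemma~\ref{L:adversary_subset}(1) shows that the integrand $g$ vanishes pointwise on $\supp(\mathbb Q'_{\theta_0})$. Since $g\geq 0$ is $C^1$ and the stretched construction of $\mathbb Q'_\theta$ ensures that for each $z$ the curve $\theta\mapsto g'(\theta,z)$ stays in the zero set of $g$ to first order at $\theta_0$, the reparameterized derivative $\mathbb E_z[\langle\nabla_{x'} g,\nabla_\theta g'\rangle]$ vanishes.

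For the Lipschitz term, reparameterize $x'=g'(\theta,z)$, pull $\nabla_\theta$ inside, and apply the chain rule. Writing $h(x'):=\mathbb E_{x\sim\mathbb P}[(f^*(x)-f^*(x'))^2/\Vert x-x'\Vert]$, a direct computation gives
\begin{align*}
\nabla_{x'} h(x') = -2\nabla f^*(x')\,\mathbb E_x\!\left[\frac{f^*(x)-f^*(x')}{\Vert x-x'\Vert}\right] + \mathbb E_x\!\left[(f^*(x)-f^*(x'))^2\,\frac{x-x'}{\Vert x-x'\Vert^3}\right].
\end{align*}
The first expectation is $\tfrac{1}{2\lambda}$ by Eq.~\ref{E:slope}, collapsing the first summand to $-\tfrac{1}{\lambda}\nabla f^*(x')$. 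For the second, I use the approximation $f^*(x)-f^*(x')\approx \tfrac{1}{2\lambda}\Vert x-x'\Vert$ (exact on $\supp(\mathbb P)\times\supp(\mathbb Q')$ under the ``if'' clause, and true on average via Eq.~\ref{E:slope}) to replace $(f^*(x)-f^*(x'))^2$ by $\tfrac{1}{4\lambda^2}\Vert x-x'\Vert^2$; combined with the pointwise identity $\nabla f^*(x')=\tfrac{1}{2\lambda}(x-x')/\Vert x-x'\Vert$ that the equality case forces, this reduces to $\tfrac{1}{2\lambda}\nabla f^*(x')$. Altogether $\lambda \nabla_{x'} h(x')\approx -\tfrac{1}{2}\nabla f^*(x')$, so after reparameterization $\lambda \nabla_\theta\mathbb E_{\mathbb P\otimes\mathbb Q'_\theta}[\cdot]\approx -\tfrac{1}{2}\nabla_\theta \mathbb E_{\mathbb Q'_\theta}[f^*]$, and substituting back yields $\nabla_\theta\tau_F\approx -\tfrac{1}{2}\nabla_\theta \mathbb E_{\mathbb Q'_\theta}[f^*]$ (i.e.\ Requirement~\ref{R:fo_divergence} with $\gamma=-\tfrac{1}{2}$); under the ``if'' clause every $\approx$ is exact.

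The main obstacle lies in the second expectation of $\nabla_{x'} h$: the equality $f^*(x)-f^*(x')=c\Vert x-x'\Vert$ is quite restrictive (it essentially forces $\supp(\mathbb P)$ to lie on a common ray from each $x'\in\supp(\mathbb Q')$), so the general ``$\approx$'' conclusion depends on how tightly the pointwise slope concentrates around its mean $\tfrac{1}{2\lambda}$ guaranteed by Eq.~\ref{E:slope}. A secondary concern is ensuring that the $\theta$-derivative of the gradient-penalty term vanishes along every generator direction $\nabla_\theta g'(\theta_0,z)$, given only that $g$ is zero on the (possibly low-dimensional) stretched support of $\mathbb Q'_{\theta_0}$.
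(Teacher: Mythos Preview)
Your skeleton matches the paper's: apply the envelope formula, then handle the $\mu$-penalty and the $\lambda$-penalty separately. The $\mu$-penalty argument is essentially right, though the vanishing of the integrand for \emph{every} $f^*\in\oc_{\tau_F}$ comes from Claim~(4) of Lemma~\ref{L:adversary_subset} (it forces $G(\mathbb P,\mathbb Q';f^*)=0$, hence the squared integrand and its $\theta$-derivative vanish pointwise), not from Claim~(1), which only constructs one particular such $f^*$.

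The genuine gap is in the $\lambda$-term. Your second summand $\mathbb E_{x}\!\big[(f^*(x)-f^*(x'))^2(x-x')/\Vert x-x'\Vert^3\big]$ carries an extra factor of $D:=f^*(x)-f^*(x')$ that you can only strip off via the pointwise approximation $D\approx c\Vert x-x'\Vert$, and even then you still need the unit vector $(x-x')/\Vert x-x'\Vert$ to be constant in $x$ to identify the result with a multiple of $\nabla f^*(x')$. As you yourself note, that is essentially the full ``if'' clause; it does not deliver the general $\approx$ statement.

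The paper avoids this by a different split that actually \emph{uses} the first-order penalty, via Lemma~\ref{L:first_order_helper}. Write the objective as $\mathbb E\big[(f^*(x)-f^*(x'))\,(1-\lambda R)\big]$ with $R=\frac{f^*(x)-f^*(x')}{\Vert x-x'\Vert}$ and apply the product rule in $\theta$. The first factor gives $\mathbb E[-\nabla_\theta f^*(x')\,(1-\lambda R)]=-\tfrac12\nabla_\theta\mathbb E_{\mathbb Q'_\theta}[f^*]$ \emph{exactly}, by Eq.~\ref{E:slope}. In the remaining piece $-\lambda\,\mathbb E[(f^*(x)-f^*(x'))\,\nabla_\theta R]$, separate $f^*(x)$ from $f^*(x')$. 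The $f^*(x')$-part carries the factor $\nabla_\theta\mathbb E_{x\sim\mathbb P}[R]$, and this is precisely where the FOGAN penalty pays off: Lemma~\ref{L:first_order_helper} (Eq.~\ref{E:slope_p_last}) shows that the gradient penalty forces $\nabla_{x'}\mathbb E_{x\sim\mathbb P}[R]\big|_{x'}=0$ on $\supp(\mathbb Q')$, so by the chain rule this part vanishes \emph{exactly}. Only the leftover $f^*(x)$-part, $-\lambda\,\mathbb E_{x}\big[f^*(x)\,\nabla_\theta\mathbb E_{x'\sim\mathbb Q'_\theta}[R]\big]$, needs the approximation $R\approx c$, and it is this single term that becomes exact under the ``if'' clause.

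In short: the first-order penalty does more than kill the $\mu$-term. It yields the identity of Lemma~\ref{L:first_order_helper}, which makes one half of the Lipschitz-penalty gradient vanish exactly and confines the pointwise approximation to the other half. Your direct computation of $\nabla_{x'}h$ does not expose this cancellation, which is why you end up leaning on the restrictive ``if'' hypothesis even for the approximate claim.
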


\begin{proof}
Start off by noting that for some $f^*\in\oc_{\tau_F}(\mathbb P,\mathbb Q_\theta)$, Theorem 1 from \cite{milgrom2002envelope} gives us
\[\nabla_\theta\tau_F(\mathbb P\Vert\mathbb Q'_\theta)|_{\theta_0}=\nabla_\theta\tau_F(\mathbb P\Vert\mathbb Q'_\theta;f^*)|_{\theta_0}\]
Further, since for $f^*\in\oc_{\tau_F}(\mathbb P,\mathbb Q_\theta)$ it holds
\[\Vert\nabla_x f^*(x)\big|_{x'}\Vert=
  \frac{\left\Vert\mathbb E_{\tilde x\sim\mathbb P}[(\tilde x- x')\frac{f^*(\tilde x)-f^*(x')}{\Vert x'-\tilde x\Vert^3}]\right\Vert}
 {\mathbb E_{\tilde x\sim\mathbb P}[\frac{1}{\Vert x'-\tilde x\Vert}]}\]
the gradient of the gradient penalty part is zero, i.e.
\[\nabla_\theta\mathbb E_{x\sim\mathbb P,x'\sim\mathbb Q_\theta}\left[\left(\Vert\nabla_x f^*(x)\big|_{x'}\Vert-
  \frac{\left\Vert\mathbb E_{\tilde x\sim\mathbb P}[(\tilde x- x')\frac{f^*(\tilde x)-f^*(x')}{\Vert x'-\tilde x\Vert^3}]\right\Vert}
 {\mathbb E_{\tilde x\sim\mathbb P}[\frac{1}{\Vert x'-\tilde x\Vert}]}
  \right)^2\right]=0.\]

One last point needs to be made before the main equation, which is for $x\in\supp(\mathbb P)$
\[\nabla_\theta\mathbb E_{x'\sim\mathbb Q'_\theta}\left[\frac{f^*(x)-f^*(x')}{\Vert x - x'\Vert}\right]\approx 0.\]
This is from the motivation of the penalized Wasserstein GAN where for an optimal critic it should hold that
$f^*(x)-f^*(x')$ is close to $c\Vert x - x'\Vert$ for some constant $c$. Note that if $\mathbb P$ and $\mathbb Q_\theta$
are such that $f^*(x)-f^*(x')=c\Vert x - x'\Vert$ is possible everywhere, then this term is equal to zero.

\begin{align*}
  \nabla_\theta\tau_F(\mathbb P\Vert\mathbb Q'_\theta)|_{\theta_0}
  &=\nabla_\theta\mathbb E_{\mathbb P\otimes\mathbb Q'_{\theta}}[(f^*(x)-f^*(x'))(1-\lambda\frac{f^*(x)-f^*(x')}{\Vert x-x'\Vert})]|_{\theta_0}.
\end{align*}
Since $\mathbb Q_\theta$ fulfills Assumption \ref{A:1}, $\mathbb Q_\theta\sim g(\theta,z)$ where $g$ is differentiable in the first argument and $z\sim\mathbb Z$
($\mathbb Z$ was defined in Assumption \ref{A:1}).
Therefore if we set $g_\theta(\cdot)=g(\theta,\cdot)$ we get
\begin{align}
\nabla_\theta\tau_F(\mathbb P\Vert\mathbb Q')|_{\theta_0}
  \nonumber =\,&\nabla_\theta\mathbb E_{x,\tilde x\sim\mathbb P, z\sim\mathbb Z,\alpha\sim\mathcal U([0,\varepsilon])}\left[(f^*(x)-f^*(\alpha\tilde x + (1-\alpha)g_\theta(z)))\left(1-\lambda\frac{f^*(x)-f^*(\alpha\tilde x + (1-\alpha)g_\theta(z))}{\Vert x-\alpha\tilde x + (1-\alpha)g_\theta(z)\Vert}\right)\right]\bigg|_{\theta_0} \\
  \label{E:big_baddy_1}=\,&-\mathbb E_{x,\tilde x\sim\mathbb P, z\sim\mathbb Z,\alpha\sim\mathcal U([0,\varepsilon])}\left[\nabla_\theta f^*(\alpha\tilde x + (1-\alpha)g_\theta(z))|_{\theta_0}\left(1-\lambda\frac{f^*(x)-f^*(\alpha\tilde x + (1-\alpha)g_{\theta_0}(z))}{\Vert x-\alpha\tilde x + (1-\alpha)g_{\theta_0}(z)\Vert}\right)\right] \\
  \label{E:big_baddy_2}&-\lambda\mathbb E_{x,\tilde x\sim\mathbb P, z\sim\mathbb Z,\alpha\sim\mathcal U([0,\varepsilon])}\left[(f^*(x)-f^*(\alpha\tilde x + (1-\alpha)g_{\theta_0}(z)))\nabla_\theta\left(\frac{f^*(x)-f^*(\alpha\tilde x + (1-\alpha)g_\theta(z))}{\Vert x-\alpha\tilde x + (1-\alpha)g_\theta(z)\Vert}\right)\bigg|_{\theta_0}\right].
  \end{align}
  Now if we look at the \ref{E:big_baddy_1} term of the equation, we see that it's equal to:
  \begin{align*}
  &-\mathbb E_{\tilde x\sim\mathbb P, z\sim\mathbb Z,\alpha\sim\mathcal U([0,\varepsilon])}\left[\nabla_\theta f^*(\alpha\tilde x + (1-\alpha)g_\theta(z))|_{\theta_0}\underbrace{\mathbb E_{x\sim\mathbb P}\left[1-\lambda\frac{f^*(x)-f^*(\alpha\tilde x + (1-\alpha)g_{\theta_0}(z))}{\Vert x-\alpha\tilde x + (1-\alpha)g_{\theta_0}(z)\Vert}\right]}_{=\frac{1}{2}, \text{ Eq.\ \ref{E:slope} from Lemma \ref{L:necessary_for_maximum}}}\right] \\
  =\,&-\frac 1 2 \nabla_\theta\mathbb E_{x'\sim\mathbb Q'_\theta}[f^*(x')]|_{\theta_0} 
  \end{align*}
  and term \ref{E:big_baddy_2} of the equation is equal to
  \begin{align*}
   &-\lambda\mathbb E_{x\sim\mathbb P}\left[f^*(x)\underbrace{\nabla_\theta\mathbb E_{x'\sim\mathbb Q'_\theta}\left[\frac{f^*(x)-f^*(x')}{\Vert x - x'\Vert}\right]}_{\approx 0, \text{ See above}}\bigg|_{\theta_0}\right]\\
   &+\lambda \mathbb E_{\tilde x\sim\mathbb P, z\sim\mathbb Z,\alpha\sim\mathcal U([0,\varepsilon])}\left[f^*(\alpha\tilde x + (1-\alpha)g_{\theta_0}(z))\underbrace{\nabla_\theta\mathbb E_{x\sim\mathbb P}\left[1-\lambda\frac{f^*(x)-f^*(\alpha\tilde x + (1-\alpha)g_{\theta}(z))}{\Vert x-\alpha\tilde x + (1-\alpha)g_{\theta}(z)\Vert}\right]}_{=0, \text{ Eq.\ \ref{E:slope_p_last}}}\bigg|_{\theta_0}\right]
  \end{align*}
  thus showing
  \[\nabla_\theta\tau_F(\mathbb P\Vert\mathbb Q_\theta')|_{\theta_0}\approx-\frac 1 2 \nabla_\theta\mathbb E_{x'\sim\mathbb Q'_\theta}[f^*(x')]|_{\theta_0}\]
\end{proof}

 \begin{lemma}\label{L:wgan_counterexample}
  Let $\tau_I$ be the WGAN-GP divergence defined in Eq.\ \ref{E:WGAN_GP}, let the target distribution be the Dirac distribution $\delta_0$
  and the family of generated distributions be the uniform distributions $\mathcal U([0,\theta])$ with $\theta> 0$. Then there is no
  $C\in\mathbb R$ that fulfills Eq.\ \ref{E:first_order_assumption} for all $\theta > 0$.
 \end{lemma}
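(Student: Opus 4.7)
The plan is to compute both sides of Eq.~\ref{E:first_order_assumption} explicitly for this specific family and observe that their ratio depends on $\theta_0$, hence cannot equal any single constant $C$. The change of variable $\beta=(1-\alpha)y$ (with $\alpha\sim\mathcal U([0,1])$, $y\sim\mathcal U([0,\theta_0])$) shows that the interpolation point $\beta$ has density $\theta_0^{-1}\log(\theta_0/\beta)$ on $(0,\theta_0)$, giving
\[\tau_I(\delta_0\Vert\mathcal U([0,\theta_0]);f) = f(0) - \tfrac{1}{\theta_0}\int_0^{\theta_0}f(y)\,dy - \tfrac{\lambda}{\theta_0}\int_0^{\theta_0}(|f'(\beta)|-1)^2\log(\theta_0/\beta)\,d\beta.\]
Taking the variation over $f\in C^1$ with the ansatz $f'<0$ then yields the Euler--Lagrange equation $f''(x)\log(\theta_0/x)-(f'(x)+1)/x = 1/(2\lambda)$ on $(0,\theta_0)$.

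Substituting $u(x)=f'(x)+1$ converts this into a first-order linear ODE with integrating factor $\log(\theta_0/x)$; solving and using the boundedness of $u$ as $x\to\theta_0^-$ (forced by $f\in C^1$) to fix the integration constant gives the closed-form slope
\[f^{*\prime}_{\theta_0}(x) = -1 + \frac{\kappa\,(1-x/\theta_0)}{\log(x/\theta_0)},\qquad \kappa := \frac{\theta_0}{2\lambda}.\]
With the Assumption~\ref{A:1} parameterization $y=\theta z$, $z\sim\mathcal U([0,1])$, and the Frullani identity $\int_0^1(z^a-z^b)/\log z\,dz=\log\frac{a+1}{b+1}$, a direct computation gives
\[\nabla_\theta\mathbb E_{\mathbb Q_\theta}[f^*_{\theta_0}]\big|_{\theta_0} = \int_0^1 z\,f^{*\prime}_{\theta_0}(\theta_0 z)\,dz = -\tfrac{1}{2} - \kappa\log(3/2).\]

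For the penalty term I would take the one-sided left derivative in $\theta$ at $\theta_0$, which is unambiguous because $f^*_{\theta_0}$ is already determined on $[0,\theta_0]$. Interchanging the double integral and differentiating under the integral sign reduces matters to $(\lambda\kappa^2/\theta_0)\int_0^1 (1-u)^2(1+\log u)/\log^2 u\,du$, which splits as $\int_0^1(1-u)^2/\log^2 u\,du=\log(27/16)$ (via the representation $1/\log^2 u=\int_0^\infty t\,u^t\,dt$, a beta-function computation, and a partial-fraction integral) plus $\int_0^1(1-u)^2/\log u\,du=\log(3/4)$ (Frullani again), yielding
\[\nabla_\theta\mathbb E_{\mathbb P\otimes\mathbb Q_\theta}[r_{f^*_{\theta_0}}]\big|_{\theta_0} = \frac{\theta_0}{4\lambda}\log\tfrac{81}{64}.\]
If some $C\in\mathbb R$ made Eq.~\ref{E:first_order_assumption} hold for every $\theta_0>0$, then letting $\theta_0\downarrow 0$ would force $C=0$ (the penalty derivative vanishes while the other tends to $-1/2$), whereas letting $\theta_0\uparrow\infty$ would force $C=-\log(81/64)/(2\log(3/2))\neq 0$, a contradiction.

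The main obstacle is producing the closed form of the optimal critic from the Euler--Lagrange ODE and then carefully evaluating the integrals, especially the $1/\log^2 u$ piece, which is only finite in the specific combination that appears; the boundary behavior at $x=\theta_0$, where $\log(\theta_0/x)$ vanishes, is where most of the technical care is concentrated.
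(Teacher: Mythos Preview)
Your overall strategy coincides with the paper's: derive the optimal WGAN-GP critic by a variational argument, then compute $\nabla_\theta\mathbb E_{\mathbb Q_\theta}[f^*]$ and $\nabla_\theta\mathbb E_{\mathbb P\otimes\mathbb Q_\theta}[r_{f^*}]$ and observe that their ratio depends on $\theta_0$. The difference is that your variational computation is actually the correct one, whereas the paper's contains a dubious step. After writing the first-variation identity (which must hold for every test function $u$ at the \emph{fixed} value $\theta$), the paper ``derives with respect to $\theta$'' as though the identity held for a continuum of $\theta$ with the same $f$, and from this obtains the linear critic $f'(x)=-(\theta/2+1)$. Your Euler--Lagrange analysis (equivalently, pointwise minimisation of $g\,(\theta_0-t)+\lambda g^2\log(\theta_0/t)$ over $g=f'+1$) gives the genuinely optimal nonlinear slope; one checks directly that with $\lambda=1$ your critic attains $\tau_I=\theta/2+(\theta^2/4)\log(4/3)$, strictly larger than the value $\theta/2$ achieved by the paper's linear critic, so the latter is not optimal.

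The price you pay for having the correct critic is that the downstream integrals are substantially harder (Frullani identities and the $1/\log^2 u$ representation), while the paper's linear critic makes both gradients elementary. Your use of the one-sided left derivative for the penalty term is legitimate for the purpose of the lemma: any $C^1$ extension of $f^*_{\theta_0}$ beyond $[0,\theta_0]$ is still an optimal critic, and for such an extension the full $\theta$-derivative of the penalty agrees from the left with the value you compute, which is enough to exclude a universal constant $C$.
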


 \begin{proof}
  For convenience, we'll restrict ourselves to the $\lambda=1$ case, for $\lambda\not = 1$ the proof is similar. Assume that $f\in\oc_{\tau_I}(\delta_0,\mathcal U([0,\theta])$ and $f(0)=0$. Since $f$ is an optimal critic, for any function $u\in C^1(X)$ and any $\varepsilon\in\mathbb R$
  it holds $\tau_I(\delta_0\Vert\mathcal U([0,\theta]);f)\geq \tau_I(\delta_0\Vert\mathcal U([0,\theta]);f+\varepsilon u)$. Therefore $\varepsilon=0$ is a maximum of the continuously differentiable function
  $\varepsilon\mapsto\tau_I(\delta_0\Vert\mathcal U([0,\theta]);f+\varepsilon u)$, and $\frac{d}{d\varepsilon}\tau_I(\delta_0\Vert\mathcal U([0,\theta]);f+\varepsilon u)|_{\varepsilon=0}=0$. Therefore
  \[\frac{d}{d\varepsilon}\tau_I(\delta_0\Vert\mathcal U([0,\theta]);f+\varepsilon u)|_{\varepsilon=0}=-\int_0^\theta u(t)dt-\int_0^\theta\frac 2 t \int_0^tu'(x)(f'(x)+1)dx\;dt=0\]
  multiplying by $-1$ and deriving with respect to $\theta$ gives us
  \[u(\theta)+\frac 2 \theta \int_0^\theta u'(x)(f'(x)+1)\;dx=0.\]
  Since we already made the assumption that $f(0)=0$ and since $\tau_I(\mathbb P\Vert\mathbb Q;f)=\tau_I(\mathbb P\Vert\mathbb Q;f+c)$ for any constant $c$, we can assume that $u(0)=0$.
  This gives us $u(\theta)=\int_0^\theta u'(x)\;dx$ and thus
  \[\int_0^\theta u'(x)\;dx+\frac 2 \theta \int_0^\theta u'(x)(f'(x)+1)\;dx = \frac 2 \theta \int_0^\theta u'(x)\left(\frac \theta 2 + f'(x)+1\right)\;dx.\]
  Therefore, for the optimal critic it holds $f'(x)=-(\frac\theta 2 + 1)$, and since $f(0)=0$ the optimal critic is $f(x)=-(\frac\theta 2 + 1)x$. Now
  \[\frac{d}{d\theta}\mathbb E_{\mathcal U([0,\theta])}[f]=-\frac{d}{d\theta}\int_0^\theta\left(\frac\theta 2 + 1\right)x\;dx=-\left(\frac\theta 2 + 1\right)\theta\]
  and
  \[\frac{d}{d\theta}\mathbb E_{\delta_0\otimes\mathcal U([0,\theta])}[r_f]=\frac{d}{d\theta}\frac 1 \theta\int_0^\theta\left(\frac\theta 2\right)^2\;dx=\frac{d}{d\theta}\frac{\theta^2}{4}=\frac{\theta}{2}.\]
  Therefore there exists no $\gamma\in\mathbb R$ such that Eq.\ \ref{E:first_order_assumption} holds for every distribution in the WGAN-GP context.
 \end{proof}

 \section{Experiments}

 \subsection{CelebA}\label{SS:appendix_celeba_words}
 The parameters used for CelebA training were:
 \begin{verbatim}
 'batch_size': 64,
 'beta1': 0.5,
 'c_dim': 3,
 'calculate_slope': True,
 'checkpoint_dir': 'logs/1127_220919_.0001_.0001/checkpoints',
 'checkpoint_name': None,
 'counter_start': 0,
 'data_path': 'celebA_cropped/',
 'dataset': 'celebA',
 'discriminator_batch_norm': False,
 'epoch': 81,
 'fid_batch_size': 100,
 'fid_eval_steps': 5000,
 'fid_n_samples': 50000,
 'fid_sample_batchsize': 1000,
 'fid_verbose': True,
 'gan_method': 'penalized_wgan',
 'gradient_penalty': 1.0,
 'incept_path': 'inception-2015-12-05/classify_image_graph_def.pb',
 'input_fname_pattern': '*.jpg',
 'input_height': 64,
 'input_width': None,
 'is_crop': False,
 'is_train': True,
 'learning_rate_d': 0.0001,
 'learning_rate_g': 0.0005,
 'lipschitz_penalty': 0.5,
 'load_checkpoint': False,
 'log_dir': 'logs/0208_191248_.0001_.0005/logs',
 'lr_decay_rate_d': 1.0,
 'lr_decay_rate_g': 1.0,
 'num_discriminator_updates': 1,
 'optimize_penalty': False,
 'output_height': 64,
 'output_width': None,
 'sample_dir': 'logs/0208_191248_.0001_.0005/samples',
 'stats_path': 'stats/fid_stats_celeba.npz',
 'train_size': inf,
 'visualize': False
 \end{verbatim}
 The learned networks (both generator and critic) are then fine-tuned with learning rates divided by 10.
 Samples from the trained model can be viewed in figure \ref{F:fogan_faces}.
 \begin{figure}
  \centering
  \includegraphics[width=.6\textwidth]{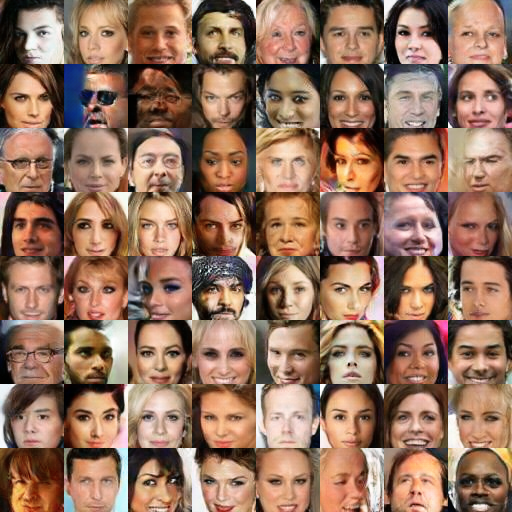}
  \caption{Images from a First Order GAN after training on CelebA data set.}\label{F:fogan_faces}
 \end{figure}
\clearpage

 \subsection{CIFAR-10}\label{SS:appendix_cifar}
 The parameters used for CIFAR-10 training were:
 \begin{verbatim}
  BATCH_SIZE: 64
  BETA1_D: 0.0
  BETA1_G: 0.0
  BETA2_D: 0.9
  BETA2_G: 0.9
  BN_D: True
  BN_G: True
  CHECKPOINT_STEP: 5000
  CRITIC_ITERS: 1
  DATASET: cifar10
  DATA_DIR: /data/cifar10/
  DIM: 32
  D_LR: 0.0003
  FID_BATCH_SIZE: 200
  FID_EVAL_SIZE: 50000
  FID_SAMPLE_BATCH_SIZE: 1000
  FID_STEP: 5000
  GRADIENT_PENALTY: 1.0
  G_LR: 0.0001
  INCEPTION_DIR: /data/inception-2015-12-05
  ITERS: 500000
  ITER_START: 0
  LAMBDA: 10
  LIPSCHITZ_PENALTY: 0.5
  LOAD_CHECKPOINT: False
  LOG_DIR: logs/
  MODE: fogan
  N_GPUS: 1
  OUTPUT_DIM: 3072
  OUTPUT_STEP: 200
  SAMPLES_DIR: /samples
  SAVE_SAMPLES_STEP: 200
  STAT_FILE: /stats/fid_stats_cifar10_train.npz
  TBOARD_DIR: /logs
  TTUR: True
 \end{verbatim}
 The learned networks (both generator and critic) are then fine-tuned with learning rates divided by 10.
 Samples from the trained model can be viewed in figure \ref{F:cifar_samples}.
 \begin{figure}
  \centering
  \includegraphics[width=.6\textwidth]{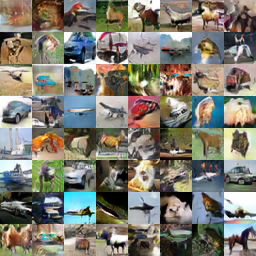}
  \caption{Images from a First Order GAN after training on CIFAR-10 data set.}\label{F:cifar_samples}
 \end{figure}
\clearpage

 \subsection{LSUN}\label{SS:appendix_lsun}
 The parameters used for LSUN Bedrooms training were:
 \begin{verbatim}
 BATCH_SIZE: 64
  BETA1_D: 0.0
  BETA1_G: 0.0
  BETA2_D: 0.9
  BETA2_G: 0.9
  BN_D: True
  BN_G: True
  CHECKPOINT_STEP: 4000
  CRITIC_ITERS: 1
  DATASET: lsun
  DATA_DIR: /data/lsun
  DIM: 64
  D_LR: 0.0003
  FID_BATCH_SIZE: 200
  FID_EVAL_SIZE: 50000
  FID_SAMPLE_BATCH_SIZE: 1000
  FID_STEP: 4000
  GRADIENT_PENALTY: 1.0
  G_LR: 0.0001
  INCEPTION_DIR: /data/inception-2015-12-05
  ITERS: 500000
  ITER_START: 0
  LAMBDA: 10
  LIPSCHITZ_PENALTY: 0.5
  LOAD_CHECKPOINT: False
  LOG_DIR: /logs
  MODE: fogan
  N_GPUS: 1
  OUTPUT_DIM: 12288
  OUTPUT_STEP: 200
  SAMPLES_DIR: /samples
  SAVE_SAMPLES_STEP: 200
  STAT_FILE: /stats/fid_stats_lsun.npz
  TBOARD_DIR: /logs
  TTUR: True
 \end{verbatim}
 The learned networks (both generator and critic) are then fine-tuned with learning rates divided by 10.
 Samples from the trained model can be viewed in figure \ref{F:lsun_samples}.
 \begin{figure}
  \centering
  \includegraphics[width=.6\textwidth]{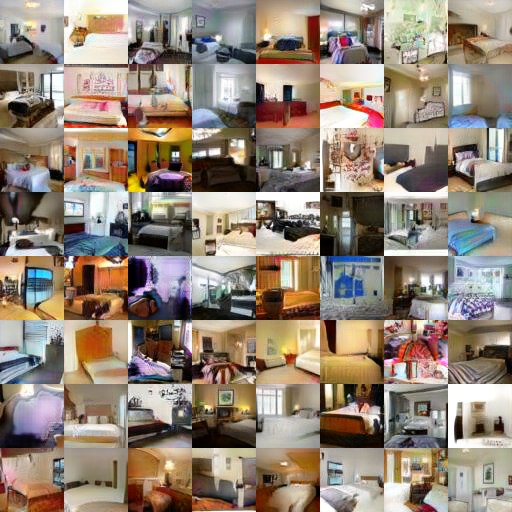}
  \caption{Images from a First Order GAN after training on LSUN data set.}\label{F:lsun_samples}
 \end{figure}
\clearpage
 
 \subsection{Billion Word}\label{SS:appendix_billion_words}
 The parameters used for the Billion Word training were one run with the following settings, followed by
 a second run using initialized with the best saved model from the first run and learning rates divided by 10.
 Samples from our method and the WGAN-GP baseline can be found in figure \ref{F:billion_samples}
 
 \begin{verbatim}
  'activation_d': 'relu',
 'batch_norm_d': False,
 'batch_norm_g': True,
 'batch_size': 64,
 'checkpoint_dir': 'logs/checkpoints/0201_181559_0.000300_0.000100',
 'critic_iters': 1,
 'data_path': '1-billion-word-language-modeling-benchmark-r13output',
 'dim': 512,
 'gan_divergence': 'FOGAN',
 'gradient_penalty': 1.0,
 'is_train': True,
 'iterations': 500000,
 'jsd_test_interval': 2000,
 'learning_rate_d': 0.0003,
 'learning_rate_g': 0.0001,
 'lipschitz_penalty': 0.1,
 'load_checkpoint_dir': 'False',
 'log_dir': 'logs/tboard/0201_181559_0.000300_0.000100',
 'max_n_examples': 10000000,
 'n_ngrams': 6,
 'num_sample_batches': 100,
 'print_interval': 100,
 'sample_dir': 'logs/samples/0201_181559_0.000300_0.000100',
 'seq_len': 32,
 'squared_divergence': False,
 'use_fast_lang_model': True
 \end{verbatim}

\begin{figure}
\centering
\begin{minipage}[b]{.5\linewidth}
\begin{verbatim}
Change spent kands that the righ
Qust of orlists are mave hor int
Is that the spens has lought ant
If a took and their osiy south M
Willing contrased vackering in S
The Ireas's last to vising 5t ..
The FNF sicker , Nalnelber once 
She 's wast to miblue as ganemat
threw pirnatures for hut only a 
Umialasters are not oversup on t
Beacker it this that that that W
Though 's lunge plans wignsper c
He says : WalaMurka in the moroe
\end{verbatim} 
\end{minipage}%
\begin{minipage}[b]{.5\linewidth}
\begin{verbatim}
Dry Hall Sitning tven the concer
There are court phinchs hasffort
He scores a supponied foutver il
Bartfol reportings ane the depor
Seu hid , it ’s watter ’s remold
Later fasted the store the inste
Indiwezal deducated belenseous K
Starfers on Rbama ’s all is lead
Inverdick oper , caldawho ’s non
She said , five by theically rec
RichI , Learly said remain .‘‘‘‘
Reforded live for they were like
The plane was git finally fuels
\end{verbatim} 
\end{minipage}
\caption{Samples generated by First Order GAN trained on fhe One Billion Word benchmark with FOGAN
(left) the original TTUR method (right).}\label{F:billion_samples}
\end{figure}
\end{document}